\title{Adaptive 3D Reconstruction via Diffusion Priors and Forward Curvature-Matching Likelihood Updates}
\author{%
    Seunghyeok Shin \qquad
    Dabin Kim \qquad
    Hongki Lim\thanks{Corresponding author}\\
  \parbox{\linewidth}{\centering\small\mdseries
    Department of Electrical and Computer Engineering, Inha University\\
    \texttt{\{ssh8642, 1124db\}@inha.edu, hklim@inha.ac.kr}
  }
}
\begin{document}

\maketitle
\begin{abstract}
Reconstructing high-quality point clouds from images remains challenging in computer vision. Existing generative models, particularly diffusion models, based approaches that directly learn the posterior may suffer from inflexibility—they require conditioning signals during training, support only a fixed number of input views, and need complete retraining for different measurements. Recent diffusion-based methods have attempted to address this by combining prior models with likelihood updates, but they rely on heuristic fixed step sizes for the likelihood update that lead to slow convergence and suboptimal reconstruction quality. We advance this line of approach by integrating our novel Forward Curvature-Matching (FCM) update method with diffusion sampling. Our method dynamically determines optimal step sizes using only forward automatic differentiation and finite-difference curvature estimates, enabling precise optimization of the likelihood update. This formulation enables high-fidelity reconstruction from both single-view and multi-view inputs, and supports various input modalities through simple operator substitution—all without retraining. Experiments on ShapeNet and CO3D datasets demonstrate that our method achieves superior reconstruction quality at matched or lower NFEs, yielding higher F-score and lower CD and EMD, validating its efficiency and adaptability for practical applications.
Code is available at{\hypersetup{hidelinks}
\href{https://github.com/Seunghyeok0715/FCM}{\textcolor{blue}{here}}}.
\end{abstract}
\section{Introduction}
\label{sec:Introduction}
Three-dimensional reconstruction has become increasingly important across diverse applications including robotics, autonomous driving, augmented reality, and virtual environments. Among various 3D representations, point clouds serve as a fundamental data structure for representing objects and scenes due to their simplicity and flexibility. However, generating high-quality point clouds that accurately capture intricate details remains challenging, particularly when working with limited input information such as single-view images.

\begin{figure}[t]
\centering
\includegraphics[width=0.95\linewidth]{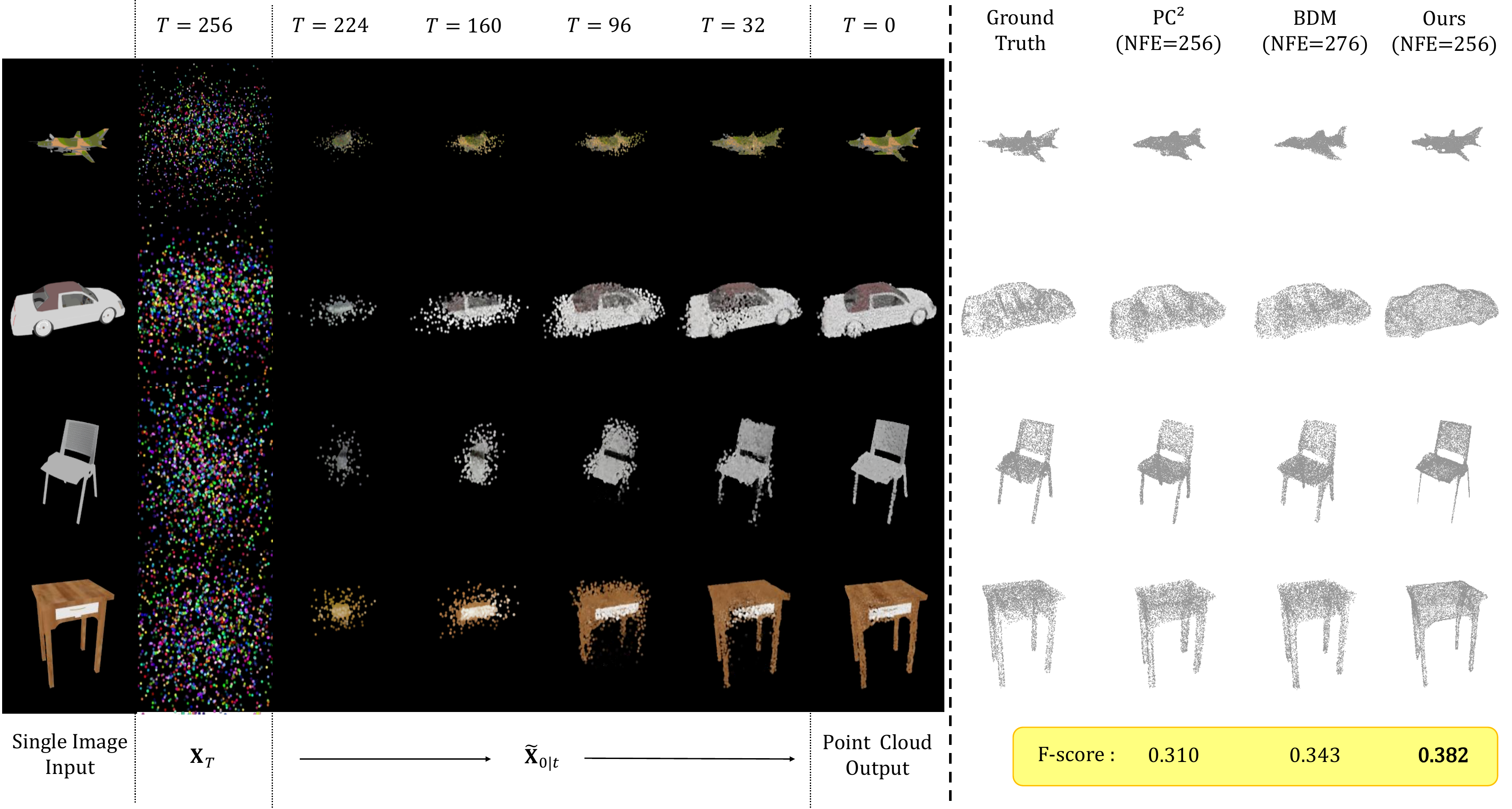}
\captionof{figure}{
    Left: Visualization of our diffusion process from random noise ($T=256$) to final reconstructions ($T=0$) for various object categories. 
    Right: Comparison of point cloud reconstructions between Ground Truth, previous methods (PC$^2$ \cite{MelasKyriazi2023PC2PP}, BDM \cite{Xu2024BayesianDM}), and our approach. 
    Our method achieves higher fidelity reconstructions with better F-scores (0.382) than existing approaches while using fewer function evaluations, particularly excelling at preserving fine structural details.
}
\label{fig:splash}
\end{figure}
 
Recent advances in deep generative models, particularly diffusion models, have shown remarkable success in generating high-fidelity images \cite{Ho2020DenoisingDP, Karras2022ElucidatingTD} and 3D data. Diffusion models use an iterative denoising process to progressively transform random noise into structured outputs, making them effective for capturing complex geometric patterns. In the domain of point cloud generation, researchers have begun exploring diffusion-based approaches with promising results \cite{Luo2021DiffusionPM, Zhou20213DSG, Zeng2022LIONLP, Nichol2022PointEAS, MelasKyriazi2023PC2PP, Mo2023DiT3DEP, Xu2024BayesianDM}.

While diffusion models offer powerful generative capabilities, applying them to 3D reconstruction presents unique challenges due to its nature as an inverse problem. In typical inverse problems (formulated as $\mathbf{y}=A\mathbf{x}$), iterative optimization methods solving least-squares objectives can determine optimal step sizes analytically using gradients that incorporate $A^\top$ (the adjoint of $A$). However, 3D object rendering represents a particularly challenging case where the rendering operator is complex and non-linear, making the computation of the adjoint operation intractable. Since classical step-size formulas require an adjoint of the forward operator, the absence of a tractable renderer adjoint complicates step-size selection. This fundamental challenge impacts how researchers approach diffusion-based point cloud generation, especially when incorporating image-based guidance.

Current image-to-point-cloud methods predominantly learn the score of the posterior distribution $\nabla \log p(\mathbf{X}|\mathbf{y})$ directly, where $\mathbf{X}$ represents the point cloud and $\mathbf{y}$ represents image measurements. This direct approach incurs significant limitations: it necessitates including images as conditioning signals during training \cite{MelasKyriazi2023PC2PP}, restricts models to a fixed number of input views without specialized encoders \cite{FENG2024DIFFPOINTSA}, and requires computationally expensive retraining whenever measurement types change (e.g., from RGB images to depth maps).
 
A promising alternative approach \cite{Mu2024GSDVG} decomposes the posterior $p(\mathbf{X}|\mathbf{y})$ into a trainable prior $p(\mathbf{X})$ and an updatable likelihood $p(\mathbf{y}|\mathbf{X})$, employing Diffusion Posterior Sampling (DPS) \cite{Chung2023DiffusionPS} with gradient updates via $\nabla \log p(\mathbf{y}|\mathbf{X})$ for Gaussian splatting-based 3D reconstruction. While this decomposition is conceptually straightforward and modular, current implementations struggle with a critical limitation: determining appropriate step sizes for the likelihood updates. The non-linear nature of 3D rendering prevents analytical step size determination, leading existing methods to rely on heuristic, fixed step sizes \cite{Mu2024GSDVG}. This results in slow convergence, suboptimal reconstruction quality, and often necessitates additional 2D diffusion models for refinement, further complicating the pipeline.
 
To address these limitations, we present a novel approach that combines a point cloud diffusion model with Forward Curvature-Matching (FCM) optimization. Our approach, illustrated in Fig.~\ref{fig:method}, computes an adaptive step size using a Barzilai–Borwein rule and refines it with an Armijo backtracking condition, enabling more precise control. Our key insight is that by incorporating FCM's principled, curvature informed step size determination into the diffusion sampling process without any adjoint operations, we can effectively navigate the complex optimization landscape of 3D reconstruction.
 
Unlike previous DPS-based methods that rely on heuristic step sizes for the likelihood update, our approach employs FCM optimization to dynamically determine optimal step sizes. The key innovation is our reliance solely on the differentiable forward pass for curvature-informed step-size determination, obviating the adjoint. This enhancement enables significantly more efficient and accurate optimization during the diffusion sampling process. The technical contributions of our work include:
\begin{itemize}
    \item We integrate the FCM method with the reverse process of diffusion models, enabling high-fidelity point cloud reconstruction that accurately matches input images.
    \item Our gradient-based updates are not constrained by the number of input images, allowing for point cloud reconstruction from either single-view or multi-view images without modifying the base model.    
    \item Our method can be applied to various measurement modalities (such as RGB image to 3D object or depth map to 3D object) by simply substituting the appropriate operator rather than retraining the entire model, significantly enhancing flexibility and efficiency.
\end{itemize}
We demonstrate the effectiveness of our approach by reconstructing colored point clouds from both synthetic and real-world datasets. Our method achieves more accurate reconstruction with fewer neural function evaluations (NFEs) compared to existing techniques, validating the efficiency of our FCM-based likelihood optimization. We further demonstrate the adaptability of our approach by applying it to both multi-view reconstruction and depth map to point cloud generation without retraining, highlighting its potential for diverse applications. The remainder of this paper is organized as follows: Section~\ref{sec:related works} reviews related work, Section~\ref{sec:Method} presents the proposed method, Section~\ref{sec:Experiments} details our experimental results, and Section~\ref{sec:conclusion} concludes with a discussion of future directions.
 
\section{Related Work}
\label{sec:related works}
 
\paragraph{3D Reconstruction from Images.}
 
As interest in 3D content creation continues to grow, research on reconstructing 3D shapes from 2D observations has advanced significantly. This challenging task requires inferring complete 3D structures, including both visible and occluded regions, from limited viewpoints. The difficulty is compounded by the scarcity of large-scale 3D datasets.
 
Various 3D representations have been explored for reconstruction, each with distinct advantages: mesh-based methods \cite{Hu2021SelfSupervised3M, Wang2018Pixel2MeshG3, Arshad2023LISTLI} offer compact representation but struggle with topological complexity; voxel-based approaches \cite{Kniaz2020ImagetoVoxelMT} provide a regular structure but face resolution limitations; point cloud methods \cite{MelasKyriazi2023PC2PP, Xu2024BayesianDM, Lee2025RGB2Point3P} offer flexibility with additional rendering requirements; implicit functions \cite{Muller2023DiffRFR3, Gu2023NerfDiffSV, Chen2023SingleStageDN, Hong2024LRMLR} enable high-quality rendering but are computationally intensive; and Gaussian splatting techniques \cite{Tang2024LGMLM, Szymanowicz2024SplatterIU, Mu2024GSDVG} balance quality and efficiency.
 
Point cloud generative models have evolved from early GAN-based \cite{Achlioptas2017LearningRA, Groueix2018APA, Shu20193DPC} and VAE-based \cite{Yang2019PointFlow3P} approaches to more recent diffusion-based methods. Diffusion models offer several advantages: stable training dynamics, high-quality generation capabilities, flexibility in conditioning, and a strong probabilistic foundation. The seminal work by Luo et al. \cite{Luo2021DiffusionPM} introduced diffusion models for point cloud generation, with subsequent research extending these methods for various applications \cite{Zeng2022LIONLP, Li2024EnhancingDP}.
 
Building on these advancements in 3D generative modeling, recent diffusion-based approaches have significantly advanced image-to-point cloud reconstruction. PC$^2$ \cite{MelasKyriazi2023PC2PP} performs single-view reconstruction by denoising a point cloud with projection conditioning, which ensures geometric consistency between the reconstruction and input view. However, it directly learns the posterior distribution, requiring images during training and limiting adaptability to varying input conditions. Bayesian Diffusion Models (BDM) \cite{Xu2024BayesianDM} offer a complementary perspective by factorizing the 3D reconstruction task into a learned score of the prior $\nabla \log p(\mathbf{X})$ trained solely on 3D shapes and a learned score of the posterior $\nabla \log p(\mathbf{X}|\mathbf{y})$ trained with paired image–shape data. During inference, the prior and posterior models exchange intermediate outputs over multiple denoising steps. While this "fusion-with-diffusion" paradigm is effective, BDM relies on a PC$^{2}$-like trained posterior score function that requires images during training, thus limiting its adaptability to varying input modalities.
 
\paragraph{Diffusion Posterior Sampling.}
 
Diffusion Posterior Sampling (DPS) \cite{Chung2023DiffusionPS} proposes a framework for solving inverse problems using diffusion models without retraining for each new measurement type. This approach decomposes the posterior $p(\mathbf{X}|\mathbf{y})$ into a pre-trained prior $p(\mathbf{X})$ and an adaptable likelihood term $p(\mathbf{y}|\mathbf{X})$. During sampling, the intermediate predictions are adjusted using gradient updates from the likelihood term. 
 
Recent works applying DPS to 3D reconstruction include GSD \cite{Mu2024GSDVG}, which uses DPS with Gaussian Splatting for view-guided 3D generation. However, these methods rely on heuristic, manually-tuned step sizes for the likelihood update, which often requires careful calibration for each task and can lead to suboptimal convergence or reconstruction quality.
 
\paragraph{Adaptive Step Size Methods in Optimization.}
Our FCM approach has roots in several foundational optimization techniques while introducing novel algorithmic elements. In numerical optimization, determining appropriate step sizes is a well-studied challenge with various classical solutions. Quasi-Newton methods \cite{Nocedal2018NumericalO} approximate the Hessian using rank-one or rank-two updates (e.g., BFGS, L-BFGS \cite{Liu1989OnTL}), but require matrix storage and operations. Barzilai-Borwein (BB) methods \cite{Barzilai1988TwoPointSS} provide scalar approximations to the secant equation using the differences of consecutive iterates and gradients. Line search techniques with Armijo \cite{Armijo1966MinimizationOF} or Wolfe conditions \cite{Wolfe1971ConvergenceCF} ensure sufficient descent but typically involve multiple function evaluations.
 
Building on these foundations, FCM introduces several innovations specifically for diffusion-based 3D reconstruction: (1) a scale-adaptive curvature probe ($\delta_k = \delta_0 \cdot \frac{\|\mathbf{x}_k\|}{\|\mathbf{g}_k\|}$) that automatically calibrates to the geometry of point clouds and gradient magnitudes, (2) a forward-difference directional curvature estimate that requires no adjoint operations of the renderer—critical for complex neural renderers where adjoint computation is intractable, (3) a robust BB-inspired step-size computation combined with principled capping that offers theoretical guarantees, and (4) a "once-only" Armijo check.
\newtheorem{theorem}{Theorem}[section]
\newtheorem{lemma}[theorem]{Lemma}
\newtheorem{corollary}[theorem]{Corollary}
\newtheorem{proposition}[theorem]{Proposition}
\newtheorem{assumption}[theorem]{Assumption}
\newtheorem{remark}[theorem]{Remark}

\newcommand{\R}{\mathbb{R}}
\newcommand{\vb}[1]{\mathbf{#1}}
\newcommand{\norm}[1]{\left\lVert#1\right\rVert}
\newcommand{\inner}[2]{\langle #1,#2\rangle}

\begin{figure*}
\centering
  \makebox[\textwidth][c]{%
    \includegraphics[width=1.2\linewidth]{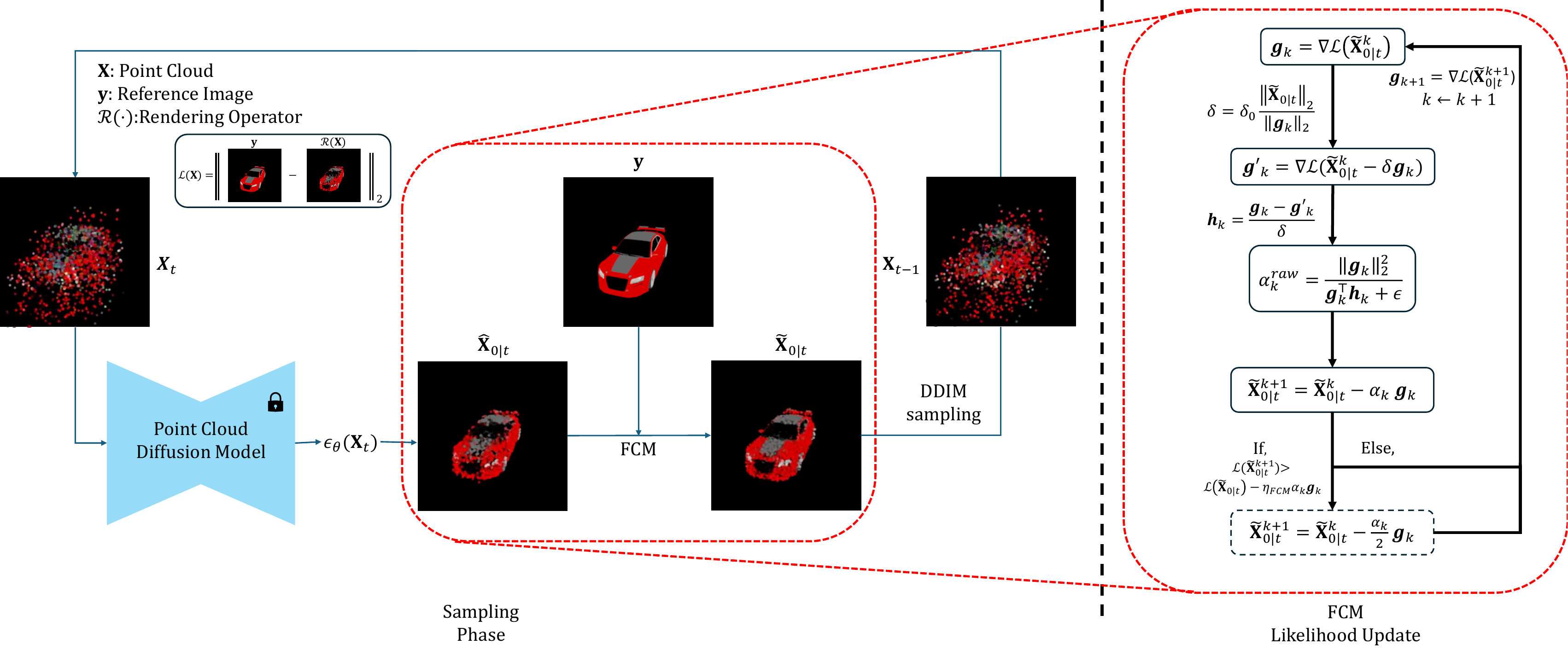}
  }
\caption{Overview of our FCM-guided point cloud diffusion framework. The sampling phase (left) shows how the diffusion model progressively transforms random noise $\mathbf{X}_t$ into structured point clouds through DDIM sampling. The FCM likelihood update (right) illustrates our key innovation---dynamically determining optimal step sizes for the likelihood gradient $\nabla\|\mathbf{y}-R(\hat{\mathbf{X}}_{0|t})\|_2$. This principled optimization approach enables high-fidelity reconstruction that accurately matches input images while requiring fewer function evaluations than existing methods.}%
\label{fig:method}
\end{figure*}
\section{Method}
\label{sec:Method}
 
Our goal is to perform high-quality, flexible 3D reconstruction by decomposing the posterior distribution
\(p(\mathbf{X} \mid \mathbf{y})\) into a learned prior \(p_\theta(\mathbf{X})\)
and a likelihood update \(p(\mathbf{y} \mid \mathbf{X})\) that does not require separate training.
We train only the score of the prior \(\nabla \log p_\theta(\mathbf{X})\) on unlabeled 3D data.
Then, at inference, we incorporate the measurement information
(e.g., single-view or multi-view images, depth maps)
through an adaptive Forward Curvature-Matching (FCM) update,
which approximates \(\nabla \log p(\mathbf{y} \mid \mathbf{X})\).
 
Any forward operator \(\mathcal{R}\) (e.g., a differentiable renderer for images or a map from 3D to depth measurements)
can be plugged in to guide the generation of point clouds via the same trained diffusion prior.
This design separates the learned model from the measurement modality, eliminating the need for retraining
whenever the measurement operator changes. In this section, we detail our method in four parts.
First, we describe how we train the diffusion model \(\nabla \log p_\theta(\mathbf{X})\).
Next, we present our differentiable renderer \(\mathcal{R}\) for the image-based scenario.
We then introduce the FCM-based likelihood update, highlighting why FCM is needed in non-linear settings
and how step sizes are optimally determined through a principled approach.
Finally, we extend the method to the multi-view setting.

\subsection{Diffusion Prior for Point Clouds}
\label{subsec:diffusion_prior}
 
We begin by training a diffusion model \(p_\theta(\mathbf{X})\) on a large dataset of colored point clouds.
Following the standard DDPM \cite{Ho2020DenoisingDP} framework, we define a forward diffusion process
that corrupts a clean point cloud \(\mathbf{X}_0\) into \(\mathbf{X}_T\) with Gaussian noise over \(T\) timesteps.
The reverse process is modeled by a neural network that estimates the noise at each timestep.
Formally, in the forward process:
\begin{equation}
\label{eq:forward}
q(\mathbf{X}_t \mid \mathbf{X}_{t-1})
= \mathcal{N}\bigl(\mathbf{X}_t; \sqrt{1-\beta_t}\,\mathbf{X}_{t-1}, \beta_t \,\mathbf{I}\bigr),
\end{equation}
where \(\beta_t\) is a variance schedule. This process can be written in closed form from \(\mathbf{X}_0\):
\begin{equation}
\label{eq:closed_form}
q(\mathbf{X}_t \mid \mathbf{X}_0)
= \mathcal{N}\bigl(\mathbf{X}_t; \sqrt{\bar{\alpha}_t}\,\mathbf{X}_0,\,(1-\bar{\alpha}_t)\,\mathbf{I}\bigr),
\end{equation}
with \(\bar{\alpha}_t = \prod_{s=1}^t (1 - \beta_s)\). The reverse process approximates
\(p_\theta(\mathbf{X}_{t-1} \mid \mathbf{X}_t)\) via a learned Gaussian:
\begin{equation}
\label{eq:reverse}
p_\theta(\mathbf{X}_{t-1} \mid \mathbf{X}_t)
= \mathcal{N}\bigl(\mathbf{X}_{t-1};\,\mu_\theta(\mathbf{X}_t, t),\,\Sigma_\theta(\mathbf{X}_t, t)\bigr).
\end{equation}
During training, we minimize the simplified loss:
\begin{equation}
\label{eq:loss}
L
= \mathbb{E}_{t,\mathbf{X}_0,\boldsymbol{\epsilon}} \Bigl[
\|\boldsymbol{\epsilon} - \boldsymbol{\epsilon}_\theta(\mathbf{X}_t,t)\|^2
\Bigr],
\end{equation}
where \(\boldsymbol{\epsilon}\sim \mathcal{N}(\mathbf{0}, \mathbf{I})\).
 
Once trained, we use the DDIM sampler \cite{Song2021DenoisingDI} for inference,
generating point clouds from noise in fewer steps.
Let \(\epsilon_\theta^{(t)}(\mathbf{X}_t)\) be the noise estimate at step \(t\).
Then the DDIM update from \(\mathbf{X}_t\) to \(\mathbf{X}_{t-1}\) is:
\small\begin{equation}
\label{eq:ddim}
\mathbf{X}_{t-1}
= \sqrt{\bar{\alpha}_{t-1}}\,\hat{\mathbf{X}}_{0|t}
  + \sqrt{1 - \bar{\alpha}_{t-1} - \sigma_t(\eta)^2}\,\epsilon_\theta^{(t)}(\mathbf{X}_t)
  + \sigma_t(\eta)\,\boldsymbol{\epsilon}_t,
\end{equation}\normalsize
where
\[
\hat{\mathbf{X}}_{0|t}
= \frac{\mathbf{X}_t - \sqrt{1-\bar{\alpha}_t}\,\epsilon_\theta^{(t)}(\mathbf{X}_t)}{\sqrt{\bar{\alpha}_t}}
\]
and
\[
\sigma_t(\eta)
= \eta\sqrt{\tfrac{(1-\bar\alpha_{t-1})}{(1-\bar\alpha_t)}}
  \,\sqrt{1-\tfrac{\bar\alpha_t}{\bar\alpha_{t-1}}},
\]
is a variance term controlling the sampling stochasticity.
This DDIM sampler, combined with our trained model, provides a 3D prior that can generate plausible point clouds.
 
\subsection{Differentiable Renderer as the Measurement Operator}
\label{subsec:renderer}

Our method only requires that \(\mathcal{R}\) be differentiable, 
so both \(\mathcal{R}(\mathbf{X})\) and its gradient
\(\nabla_{\mathbf{X}} \|\mathbf{y} - \mathcal{R}(\mathbf{X})\|_2\) can be computed. In this section we introduce a forward operator \(\mathcal{R}\) that projects a point cloud \(\mathbf{X}\) into 2D measurements.

A point cloud \(\mathbf{X}\) comprises points \(\{(x_i, y_i, z_i, \mathbf{f}_i)\}\),
where \((x_i, y_i, z_i)\) are 3D coordinates and \(\mathbf{f}_i\) includes attributes such as color.
Each point is projected onto the 2D image plane using known camera parameters.
At each pixel \((u,v)\), \(\mathcal{R}\) collects the \(K\) points with the smallest depth values \(z_i\)
(i.e., the nearest points along the viewing direction)
and blends their colors via alpha compositing:
\begin{equation}
\mathcal{R}_{\mathrm{color}}(\mathbf{X})[u,v]
= \sum_{i=1}^{K}
\Bigl(\alpha_i \!\prod_{j=1}^{i-1} (1 - \alpha_j)\Bigr)\,\mathbf{f}_i.
\end{equation}
Here, the opacity $\alpha_i$ is computed from the image space footprint as
\begin{equation}
\alpha_i \;=\; 1 - \frac{\rho_i^2}{r^2},
\end{equation}
where \(r\) is the radius of the rasterizer and \(\rho_i\) is the Euclidean distance between the center of the pixel and the projected position of the point in the image space.
The product term \(\prod_{j=1}^{i-1}(1 - \alpha_j)\) ensures that closer points dominate the final color,
while partially occluded points contribute less.
Repeating this calculation for each pixel \((u,v)\) yields a 2D image matching the resolution of \(\mathbf{y}\).
 
In addition to color-based rendering, \(\mathcal{R}\) can produce a depth map by applying inverse-square weighting
to each point's distance. At each pixel \((u,v)\), the depth is computed from the same set of \(K\) nearest points:
\begin{equation}
\label{eq11}
\mathcal{R}_{\mathrm{depth}}(\mathbf{X})[u,v]
= \frac{\sum_{i=1}^{K} \frac{1}{d_i}}
       {\sum_{i=1}^{K} \frac{1}{d_i^2}},
\end{equation}
so that points closer to the camera have a larger influence on the final depth.
Repeating this process for each pixel yields a 2D depth map matching the resolution of \(\mathbf{y}\).
 
Because we do not learn a dedicated score function for \(\nabla \log p(\mathbf{y}\mid \mathbf{X})\),
different operators \(\mathcal{R}\) can be swapped in with minimal effort.
If \(\mathbf{y}\) is a single-view image, then \(\mathcal{R} = \mathcal{R}_{\mathrm{color}}\) with a single camera.
For multi-view input, each view is rendered separately and their pixel or feature errors are averaged,
as described in Section~\ref{subsec:multi_view}.
If \(\mathbf{y}\) is a depth map, then \(\mathcal{R} = \mathcal{R}_{\mathrm{depth}}\) from Eq.~\eqref{eq11}.

\begin{figure*}
\centering
\includegraphics[width=1.0\linewidth]{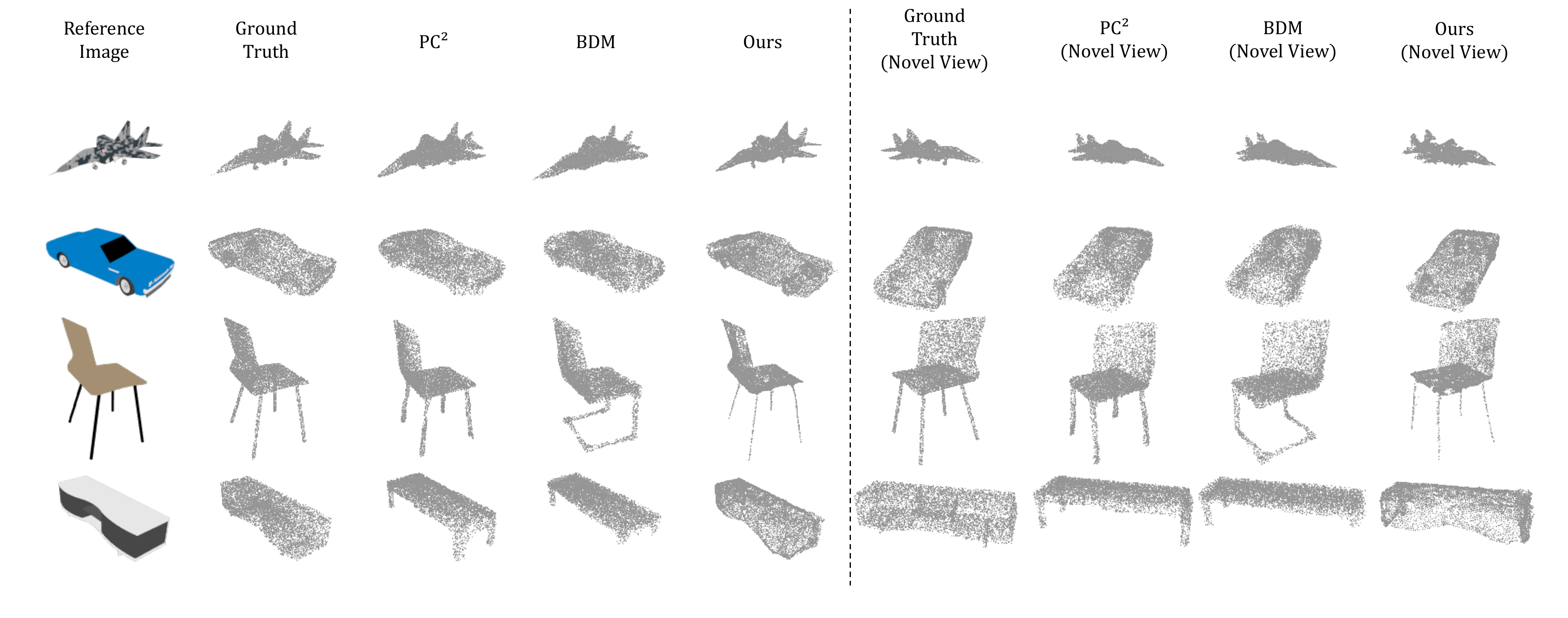}
\caption{Qualitative comparison of single-view 3D reconstructions on the ShapeNet dataset. The figure displays point cloud reconstructions from our method, PC$^2$, and BDM for various object categories, highlighting the superior detail and accuracy of our approach.}%
\label{fig:shapenet_comparison}
\end{figure*}
 \begin{table*}
 \centering
 \makebox[\linewidth]{
    {\scriptsize
     \begin{tabular}{c|ccc|ccc|ccc}
     \toprule
     \multirow{2}{*}{\textit{Category}} & \multicolumn{3}{c|}{EMD($\times10$)} & \multicolumn{3}{c|}{CD($\times10$)} & \multicolumn{3}{c}{F-score} \\
     \cline{2-10} 
 & PC$^2$ \cite{MelasKyriazi2023PC2PP} & BDM \cite{Xu2024BayesianDM} & Ours & PC$^2$ \cite{MelasKyriazi2023PC2PP} & BDM \cite{Xu2024BayesianDM} & Ours& PC$^2$ \cite{MelasKyriazi2023PC2PP} & BDM \cite{Xu2024BayesianDM} & Ours  \\ \midrule
     airplane         & 0.587  & 0.577 & \textbf{0.476} & 0.399 & 0.417 & \textbf{0.378} & 0.498 & \textbf{0.543} & \textbf{0.543} \\
     car              & 0.565  & 0.723 & \textbf{0.517} & 0.558 & 0.664 & \textbf{0.460} & 0.262 & 0.289 & \textbf{0.386} \\
     chair            & 0.701  & \textbf{0.643} & 0.662 & 0.636 & \textbf{0.613} & 0.679 & 0.241 & 0.271 & \textbf{0.282} \\
     table            & 0.735  & \textbf{0.647} & 0.691 & 0.703 & \textbf{0.656} & 0.727 & 0.240 & 0.268 & \textbf{0.319} \\ \midrule
     \textit{Average} & 0.647  & 0.648 & \textbf{0.587} & 0.574 & 0.588 & \textbf{0.561} & 0.310 & 0.343 & \textbf{0.382} \\
     \bottomrule
     \end{tabular}
     }
     }
     \caption{Quantitative evaluation of single-view 3D reconstruction on the ShapeNet dataset. NFEs were matched equally across our method, PC$^2$, and reconstruction model of BDM ($T=256$). For BDM, additional NFEs were incurred due to the prior model ($T=20$).}
     \label{tab:shapenet_comparison}
 \end{table*}

\subsection{Likelihood Update via Forward Curvature-Matching}
\label{subsec:FCM}
 
In standard diffusion posterior sampling (DPS) \cite{Chung2023DiffusionPS}, one iteratively updates the current sample 
\(\mathbf{X}_t\) with a term proportional to the gradient \(\nabla_{\mathbf{X}} \log p(\mathbf{y}\!\mid\!\mathbf{X})\). 
However, for complex, non-linear forward operators \(\mathcal{R}\), 
determining an appropriate step size is non-trivial. 
Previous approaches resort to heuristics \cite{Chung2023DiffusionPS} 
or empirically tuned factors \cite{Mu2024GSDVG} to balance the data fidelity term with the learned diffusion prior. 
While this can be effective, it may hamper convergence speed or degrade reconstruction quality if not carefully tuned.
 
To address these limitations, we propose Forward Curvature-Matching (FCM), a novel algorithm designed specifically for diffusion-based 3D reconstruction. The development of FCM was guided by key requirements: working without adjoint operations (intractable for neural renderers), maintaining predictable computational cost, and using universal parameters across different reconstruction tasks.
 
Our approach relies on a key insight: we can estimate curvature information through a scaled directional probe without requiring full Hessian approximations \cite{Nocedal2018NumericalO}. For the measurement loss \(\mathcal{L}(\mathbf{x}) = \|\mathbf{y} - \mathcal{R}(\mathbf{x})\|_2\), given the current estimate \(\mathbf{x}_k\) and gradient \(\mathbf{g}_k = \nabla\mathcal{L}(\mathbf{x}_k)\), we compute:
\begin{align}
\delta_k &= \delta_0 \cdot \frac{\|\mathbf{x}_k\|}{\|\mathbf{g}_k\|}, \quad\quad
\mathbf{x}_k' = \mathbf{x}_k - \delta_k \cdot \mathbf{g}_k, \\
\mathbf{g}_k' &= \nabla\mathcal{L}(\mathbf{x}_k'), \quad\quad
\mathbf{h}_k = \frac{\mathbf{g}_k - \mathbf{g}_k'}{\delta_k}
\end{align}
This \(\mathbf{h}_k\) approximates \(\nabla^2\mathcal{L}(\mathbf{x}_k) \cdot \mathbf{g}_k\) along the gradient direction. The scale-adaptive probe ($\delta_k$) automatically calibrates to the geometry of the point cloud, a crucial advantage over traditional finite-difference approaches \cite{Dennis1983NumericalMF}.
 
We then compute a Barzilai–Borwein-inspired \cite{Barzilai1988TwoPointSS} step size, modified for robustness:
\begin{equation}
\label{eq:alpha}
\alpha_k^{\text{raw}} = \frac{\|\mathbf{g}_k\|^2}{\langle\mathbf{g}_k, \mathbf{h}_k\rangle + \varepsilon}, \quad
\alpha_k = \min\{\alpha_k^{\text{raw}}, 1/L\}
\end{equation}
where \(\varepsilon = 10^{-12}\) and \(L\) is the Lipschitz constant of \(\nabla\mathcal{L}\). The capping mechanism ensures stability while maintaining theoretical guarantees. Unlike classical line searches that require multiple function evaluations \cite{Nocedal2018NumericalO}, we incorporate a single Armijo check \cite{Armijo1966MinimizationOF}: if \(\mathcal{L}(\mathbf{x}_k - \alpha_k \mathbf{g}_k) > \mathcal{L}(\mathbf{x}_k) - \eta_{\text{FCM}} \cdot \alpha_k \cdot \|\mathbf{g}_k\|^2\), we halve \(\alpha_k\) once and accept.
 
This design yields a fixed computational cost of exactly two backward and three forward passes per step—significantly more efficient than traditional optimization methods like L-BFGS \cite{Liu1989OnTL} or Wolfe line searches \cite{Wolfe1971ConvergenceCF} with unpredictable evaluation counts. 
 \begin{figure}[t]
    \centering
    \includegraphics[width=0.75\linewidth]{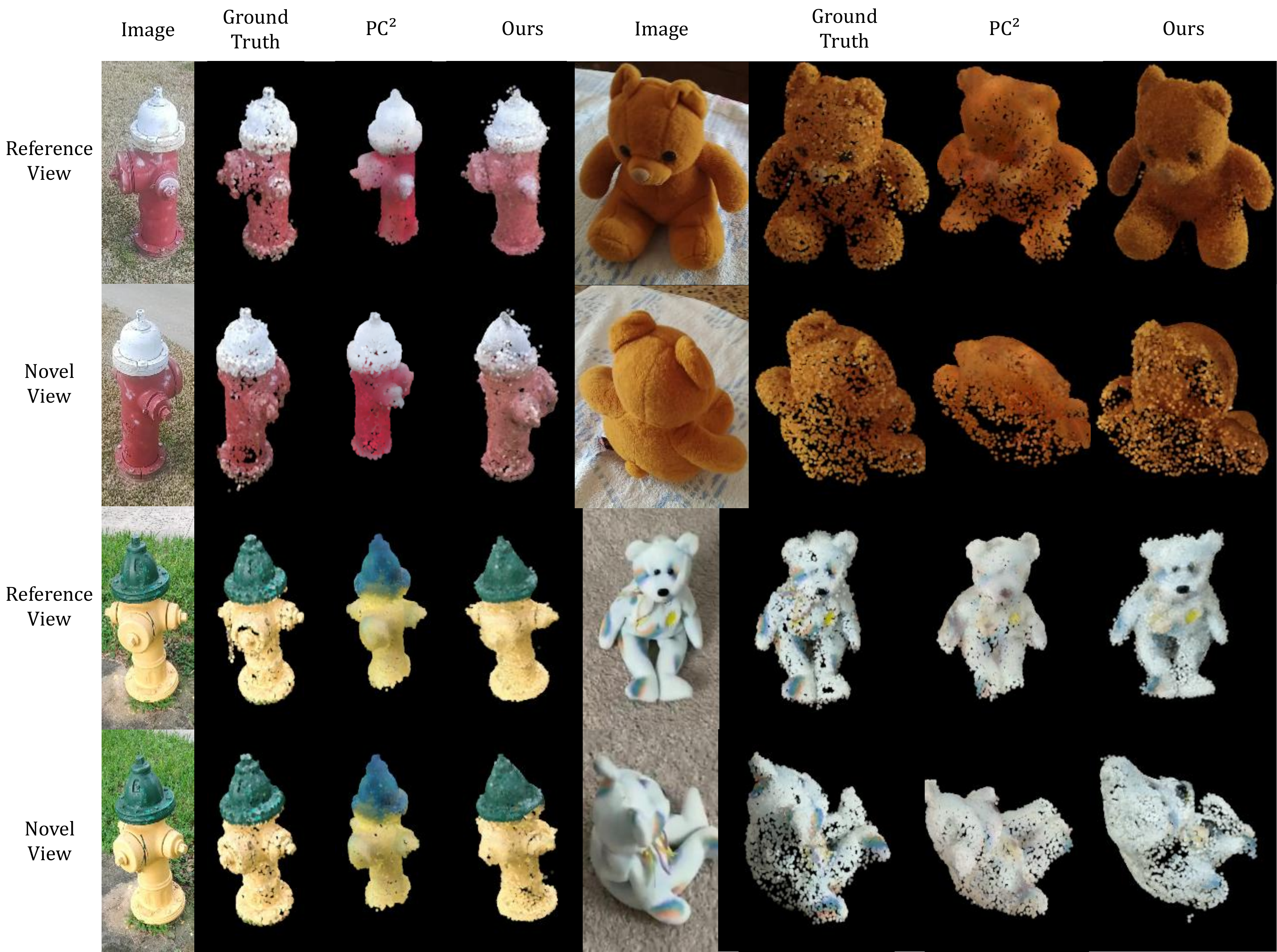}
    \caption{Comparison of rendered images from reconstructed point clouds on the CO3D dataset. The figure shows renderings from our method and PC$^2$, illustrating the higher fidelity and better preservation of details in our reconstructions.}
    \label{fig:co3d_comparison}
\end{figure}
 
\subsubsection{Theoretical Guarantees}
\label{subsec:fcm_theory}
 
Our approach is built on the following assumptions, which are typically satisfied in the context of 3D reconstruction:
 
\begin{assumption}[Smoothness]
\label{ass:smooth}
\(\mathcal{L}\) is \(L\)-smooth: \(\|\nabla\mathcal{L}(\mathbf{u})-\nabla\mathcal{L}(\mathbf{v})\| \leq L \cdot \|\mathbf{u}-\mathbf{v}\|\).
\end{assumption}
 
\begin{assumption}[Lower bound]
\label{ass:lower}
\(\mathcal{L}_{\inf} := \inf_{\mathbf{x}}\mathcal{L}(\mathbf{x}) > -\infty\).
\end{assumption}
 
\begin{assumption}[Local convexity]
\label{ass:convex}
\(\mathcal{L}\) is convex on the set of iterates (which is typically small or "benign" in practice).
\end{assumption}
 
This approach provides theoretical guarantees on convergence and optimality, as captured in the following theorem:
 
\begin{theorem}[Guaranteed Loss Decrease]
\label{thm:decrease}
Let \(c = \min\{\frac{\eta_{\text{FCM}}}{2L}, \frac{1}{8L}\}\). Our FCM algorithm ensures:
\begin{equation}
\mathcal{L}(\mathbf{x}_{k+1}) \leq \mathcal{L}(\mathbf{x}_k) - c \cdot \|\nabla\mathcal{L}(\mathbf{x}_k)\|^2
\end{equation}
\end{theorem}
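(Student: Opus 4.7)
The plan is to invoke the $L$-smoothness descent lemma and then case-split according to the Armijo branch of the algorithm. Since the capping $\alpha_k = \min\{\alpha_k^{\text{raw}}, 1/L\}$ guarantees $\alpha_k \leq 1/L$, the standard estimate
\[
\mathcal{L}(\mathbf{x}_k - \alpha \mathbf{g}_k) \leq \mathcal{L}(\mathbf{x}_k) - \alpha\bigl(1 - \tfrac{L\alpha}{2}\bigr)\|\mathbf{g}_k\|^2
\]
applies cleanly at both $\alpha_k$ and $\alpha_k/2$. What remains is to (i) pin down a useful lower bound on $\alpha_k$ and (ii) compute the guaranteed decrease in each branch.

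For (i), I would argue that Assumptions~\ref{ass:smooth} and \ref{ass:convex} together force $\alpha_k = 1/L$. By $L$-smoothness applied to the probe pair $\{\mathbf{x}_k, \mathbf{x}_k'\}$, we get $\|\mathbf{h}_k\| = \|\mathbf{g}_k - \mathbf{g}_k'\|/\delta_k \leq L\|\mathbf{g}_k\|$, and Cauchy--Schwarz gives $\langle \mathbf{g}_k, \mathbf{h}_k\rangle \leq L\|\mathbf{g}_k\|^2$. In the opposite direction, monotonicity of $\nabla\mathcal{L}$ (equivalent to convexity) applied to the same pair gives $\langle \mathbf{g}_k' - \mathbf{g}_k, \mathbf{x}_k' - \mathbf{x}_k\rangle \geq 0$; substituting $\mathbf{x}_k' - \mathbf{x}_k = -\delta_k \mathbf{g}_k$ yields $\langle \mathbf{g}_k, \mathbf{h}_k\rangle \geq 0$. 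Combining the two bounds shows $\alpha_k^{\text{raw}} \geq 1/L$ up to the $\varepsilon$-safeguard, so the cap is active and $\alpha_k = 1/L$.

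For (ii), I would handle the two exit paths separately. In the branch where the Armijo test succeeds, the test itself certifies $\mathcal{L}(\mathbf{x}_{k+1}) - \mathcal{L}(\mathbf{x}_k) \leq -\eta_{\text{FCM}}\alpha_k\|\mathbf{g}_k\|^2 = -(\eta_{\text{FCM}}/L)\|\mathbf{g}_k\|^2$, which is at least $(\eta_{\text{FCM}}/(2L))\|\mathbf{g}_k\|^2$. In the fallback branch the algorithm accepts $\mathbf{x}_{k+1} = \mathbf{x}_k - (\alpha_k/2)\mathbf{g}_k$ with $\alpha_k/2 = 1/(2L)$, and the descent lemma at this step size gives a decrease of $(1/(2L))(1 - 1/4)\|\mathbf{g}_k\|^2 = (3/(8L))\|\mathbf{g}_k\|^2 \geq (1/(8L))\|\mathbf{g}_k\|^2$. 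Taking the worse of the two bounds yields the stated constant $c = \min\{\eta_{\text{FCM}}/(2L), 1/(8L)\}$.

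The hard part will be justifying that Assumption~\ref{ass:convex} is actually available at the probe point $\mathbf{x}_k' = \mathbf{x}_k - \delta_k \mathbf{g}_k$, since the conclusion $\alpha_k = 1/L$ hinges on convexity holding on the pair $\{\mathbf{x}_k, \mathbf{x}_k'\}$ rather than on $\mathbf{x}_k$ alone. The scale-adaptive probe $\delta_k = \delta_0\|\mathbf{x}_k\|/\|\mathbf{g}_k\|$ keeps $\mathbf{x}_k'$ inside a controlled neighborhood of $\mathbf{x}_k$, so the benign iterate set referenced in the assumption can be taken to cover both points, but I would spell this out explicitly. A secondary subtlety is that the $\varepsilon$ safeguard in the denominator produces a slight deficit in $\alpha_k^{\text{raw}}$; this must be absorbed into a negligible perturbation that does not disturb the $1/L$ conclusion, which is routine provided $\varepsilon$ is small compared to $L\|\mathbf{g}_k\|^2$.
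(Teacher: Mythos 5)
Your proposal is correct and takes essentially the same route as the paper's proof: lower-bound the pre-Armijo step size via the smoothness estimate $\langle\mathbf{g}_k,\mathbf{h}_k\rangle \le L\|\mathbf{g}_k\|^2$ on the curvature probe, then case-split on the Armijo branch, using the Armijo inequality in the success case and the $L$-smooth descent lemma at the halved step in the fallback case, and take the minimum to get $c=\min\{\tfrac{\eta_{\text{FCM}}}{2L},\tfrac{1}{8L}\}$. The only deviation is that the paper establishes the weaker bound $\tfrac{1}{2L}\le\alpha_k\le\tfrac{1}{L}$ (under the explicit condition $\varepsilon\le L\|\mathbf{g}_k\|^2$) instead of your claim that the cap is exactly active, which your slack ($\tfrac{\eta_{\text{FCM}}}{L}$ vs.\ $\tfrac{\eta_{\text{FCM}}}{2L}$, $\tfrac{3}{8L}$ vs.\ $\tfrac{1}{8L}$) absorbs anyway; your monotonicity observation at the probe point in fact makes explicit the positivity of the denominator that the paper's lemma uses implicitly.
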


When integrated into the DDIM sampling process, FCM preserves the contraction properties of diffusion models:
 
\begin{proposition}[Contraction Preservation]
\label{prop:contraction}
Under Assumptions \ref{ass:smooth}--\ref{ass:convex} and \(\alpha_k \leq 1/L\), the combined DDIM+FCM operator remains a contraction in expectation, thus preserving the diffusion contraction property.
\end{proposition}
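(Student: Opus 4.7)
The strategy is to decompose the combined operator $T_t := D_t \circ F$ (DDIM denoising after an FCM likelihood step; the argument works equally well in the reversed order), bound the Lipschitz constants of the two factors separately under coupled noise, and multiply. The FCM factor is handled using convex-optimization machinery, while the DDIM factor inherits its contractive constant $\rho_t < 1$ from existing diffusion analyses.

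\emph{FCM is non-expansive.} Under Assumptions~\ref{ass:smooth}--\ref{ass:convex}, the Baillon--Haddad theorem implies that $\nabla\mathcal{L}$ is $(1/L)$-cocoercive. Consequently, for any step size $\alpha \in (0, 2/L]$, the gradient map $F_\alpha(\mathbf{x}) = \mathbf{x} - \alpha\nabla\mathcal{L}(\mathbf{x})$ satisfies $\|F_\alpha(\mathbf{u}) - F_\alpha(\mathbf{v})\| \leq \|\mathbf{u} - \mathbf{v}\|$. The hypothesis $\alpha_k \leq 1/L$, guaranteed by the cap in \eqref{eq:alpha} and preserved by Armijo halving (which can only shrink $\alpha_k$), places us in this regime. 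The finite-difference curvature probe does not break non-expansiveness: $\alpha_k$ is a deterministic, iterate-dependent scalar chosen \emph{before} the update, so the FCM step is still a pure gradient step whose Lipschitz analysis applies pointwise and hence in expectation.

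\emph{DDIM contraction and composition.} For \eqref{eq:ddim} applied to two inputs $\mathbf{u}, \mathbf{v}$ sharing the same injected noise $\boldsymbol{\epsilon}_t$, the stochastic term cancels in the difference $D_t(\mathbf{u}) - D_t(\mathbf{v})$, reducing the analysis to the deterministic DDIM map. Standard arguments combining Lipschitz regularity of $\epsilon_\theta^{(t)}$ with the schedule coefficients $\bar\alpha_t, \bar\alpha_{t-1}$ then yield $\mathbb{E}\|D_t(\mathbf{u}) - D_t(\mathbf{v})\| \leq \rho_t \|\mathbf{u}-\mathbf{v}\|$ with $\rho_t < 1$. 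Composing,
\[
\mathbb{E}\|T_t(\mathbf{u}) - T_t(\mathbf{v})\|
\leq \rho_t\,\|F(\mathbf{u}) - F(\mathbf{v})\|
\leq \rho_t\,\|\mathbf{u} - \mathbf{v}\|,
\]
which is the desired contraction in expectation.

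The main obstacle I anticipate lies in justifying the DDIM contraction: Assumptions~\ref{ass:smooth}--\ref{ass:convex} concern only the measurement loss $\mathcal{L}$, not the learned score $\epsilon_\theta^{(t)}$, so one must either import a Lipschitz-score hypothesis from the standard diffusion literature or weaken the statement to ``$T_t$ is non-expansive whenever $D_t$ is,'' which is still enough to \emph{preserve} whatever contractivity DDIM already had. A secondary subtlety is that $\alpha_k$ depends on $\mathbf{g}_k$ and thus on the iterate, but since the cap $\alpha_k \leq 1/L$ holds pointwise, the non-expansive bound transfers under expectation without difficulty.
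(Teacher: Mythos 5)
Your proposal is correct and follows essentially the same route as the paper: establish that the FCM gradient step with $\alpha_k \le 1/L$ is (firmly) non-expansive via the Baillon--Haddad cocoercivity argument, then compose with the DDIM step and inherit its contraction factor. The ``obstacle'' you flag is resolved exactly as you suggest in your fallback -- the paper's appendix statement simply takes the DDIM step's $\rho$-contractivity (in mean-square sense, under coupled noise) as a hypothesis and proves only that the FCM step preserves it.
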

 
Our FCM method uses fixed constant \(\eta_{\text{FCM}} = 10^{-4}\) for all tasks, this principled approach leads to faster convergence and higher-quality reconstructions compared to methods that rely on heuristic step sizes. Detailed proofs and additional theoretical analysis are provided in the Appendix.

\subsection{Multi-View Reconstruction}
\label{subsec:multi_view}
 
The same FCM-based likelihood update extends naturally to multi-view reconstruction.
Suppose we have \(N\) images \(\{\mathbf{y}_i\}_{i=1}^N\) with known camera parameters. We define
\begin{equation}
\label{eq:mv_loss}
\mathcal{L}_{\mathrm{MV}}(\mathbf{X})
= \frac{1}{N} \sum_{i=1}^{N} \Bigl\|\mathbf{y}_i - \mathcal{R}_i(\mathbf{X})\Bigr\|_2,
\end{equation}
where \(\mathcal{R}_i\) is the differentiable renderer for the \(i\)-th viewpoint.
The gradient \(\nabla_{\mathbf{X}} \mathcal{L}_{\mathrm{MV}}(\mathbf{X})\)
can be used in Algorithm~\ref{alg1} (replacing the single-view line
\(\|\mathbf{y}-\mathcal{R}(\cdot)\|\) with the multi-view average).
As the number of views grows, reconstruction quality improves, yet the diffusion prior remains the same,
illustrating the modality-agnostic nature of our approach.
 
By training only the diffusion prior on unlabeled 3D shapes
and introducing an FCM-based likelihood update with an arbitrary forward operator \(\mathcal{R}\),
we achieve a flexible, adaptive 3D reconstruction pipeline.
The FCM approach ensures stable and fast convergence even with non-linear rendering operators,
outperforming fixed-step DPS approaches. 

\begin{figure}
  \centering
  \begin{minipage}[h]{0.48\textwidth}
        \centering
        \includegraphics[width=\textwidth]{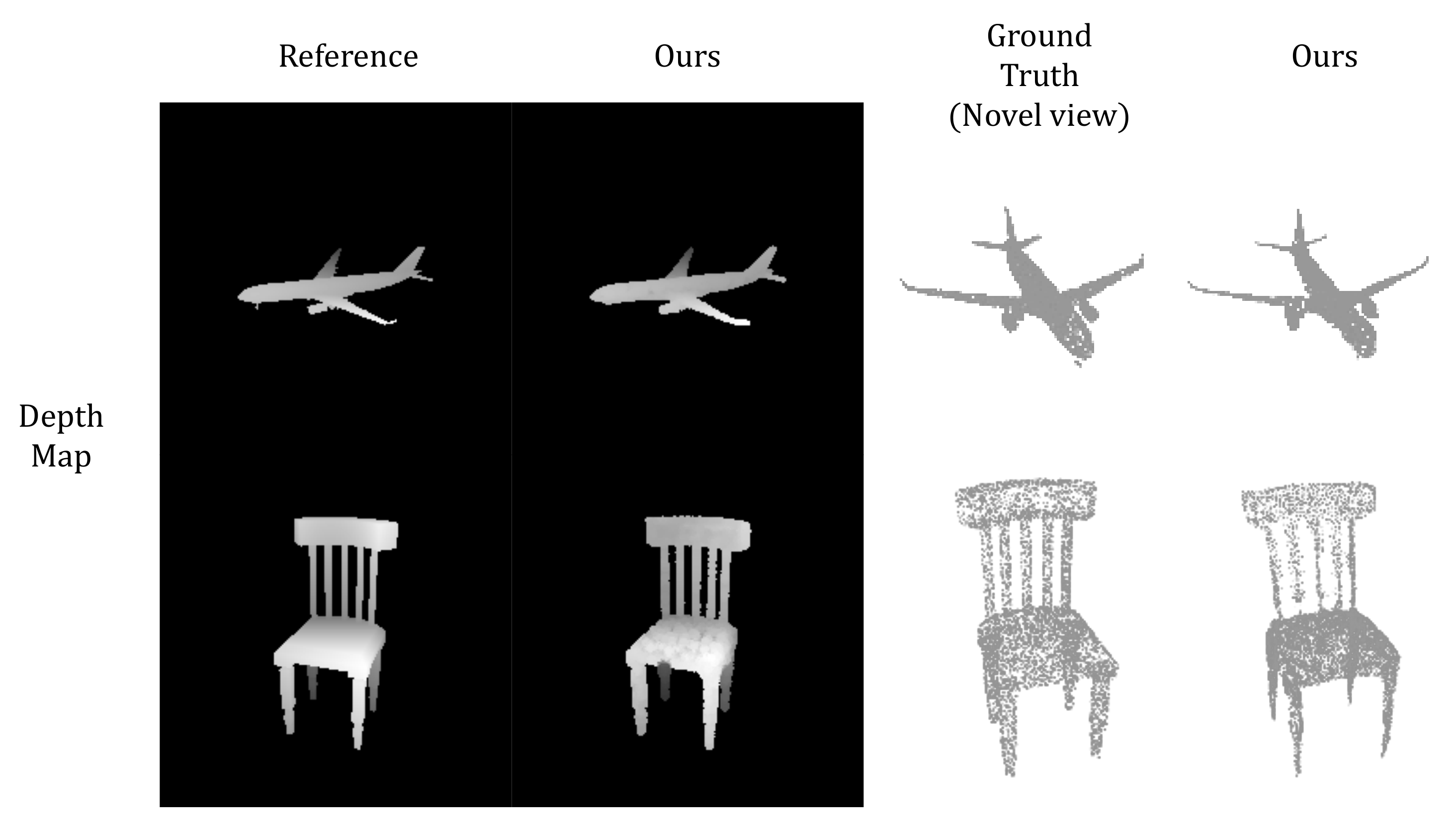}
        \captionof{figure}{Reconstruction from depth maps. The figure showcases point cloud reconstructions generated from depth map inputs.}
        \label{fig:depth_map}
  \end{minipage}
  \hfill
  \begin{minipage}[h]{0.48\textwidth}
        \centering
        \includegraphics[width=\textwidth]{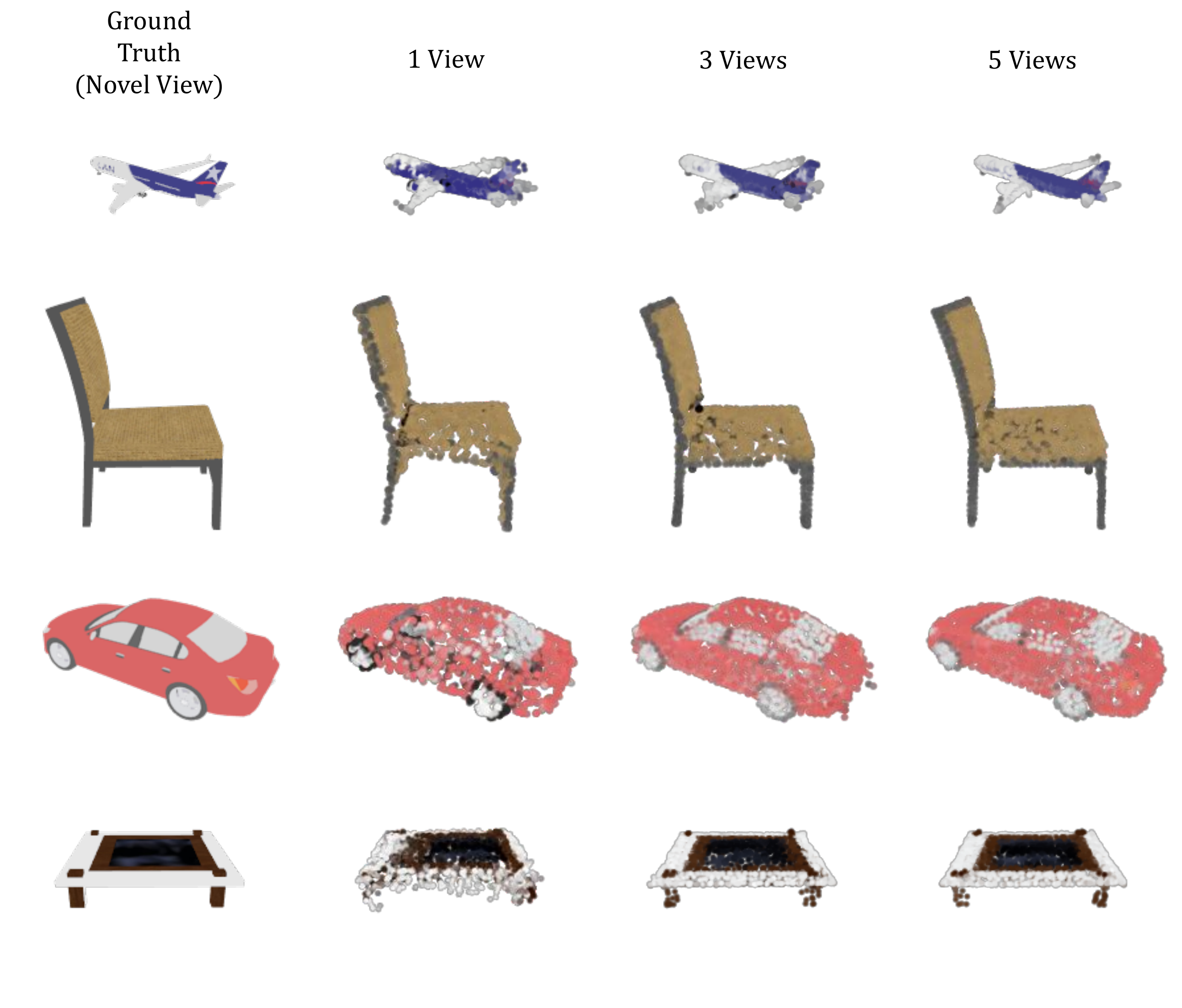}
        \captionof{figure}{Qualitative results of multi-view reconstruction. The figure displays point cloud reconstructions using varying numbers of input views, demonstrating the enhancement in reconstruction quality as more views are incorporated.}
        \label{fig:multi_view_comparison}
  \end{minipage}
\end{figure}

\section{Experiments}
\label{sec:Experiments}
We evaluate the reconstructed point clouds using three different metrics: Earth Mover’s Distance (EMD), L-1 Chamfer Distance (CD), and F-score at a threshold of 0.01. Details of the implementation are provided in the appendix.

\paragraph{ShapeNet.}
In our method, colors are essential during the rendering process. However, sampling colored point clouds from mesh-based objects is a challenging task. To address this, we train our model using the dataset provided by KeypointNet \cite{You2020KeypointNetAL}. The color information in the KeypointNet point cloud does not correspond to the actual mesh color in ShapeNet. Instead, the model assigns colors according to object parts.

We perform our evaluation using the categories \{\textit{airplane, car, chair, table}\} from the ShapeNet rendered image dataset \cite{Xu2019DISNDI}.

\paragraph{CO3D.}
The CO3D dataset is a large-scale collection of real-world multi-view images from common object categories. It provides a colored point cloud obtained using COLMAP from multi-view images, which is then used for model training and evaluation. We perform our evaluation using the categories \textit{hydrant} and \textit{teddybear} from the CO3D dataset. 

\subsection{Quantitative Results}
We evaluate the performance of reconstruction in the ShapeNet dataset. In Tab.~\ref{tab:shapenet_comparison}, our method is compared with PC$^2$ \cite{MelasKyriazi2023PC2PP} and BDM \cite{Xu2024BayesianDM}. In the original paper, BDM is evaluated using 4,096 points, whereas PC$^2$ is evaluated using 8,192 points. In this work, we adopt the evaluation approach of PC$^2$ for quantitative experiments. For BDM, we adopted the blending method that achieved the best results in their study and used PC$^2$ as the reconstruction model. Our method achieves the best results in all metrics. In this experiment, we ensured that the NFEs for all other models were set similarly for a fair comparison. Detailed comparisons with the settings proposed by their studies and quantitative results on CO3D are provided in the appendix.

\begin{figure}
  \begin{minipage}[h]{0.48\textwidth}
    \includegraphics[width=\linewidth]{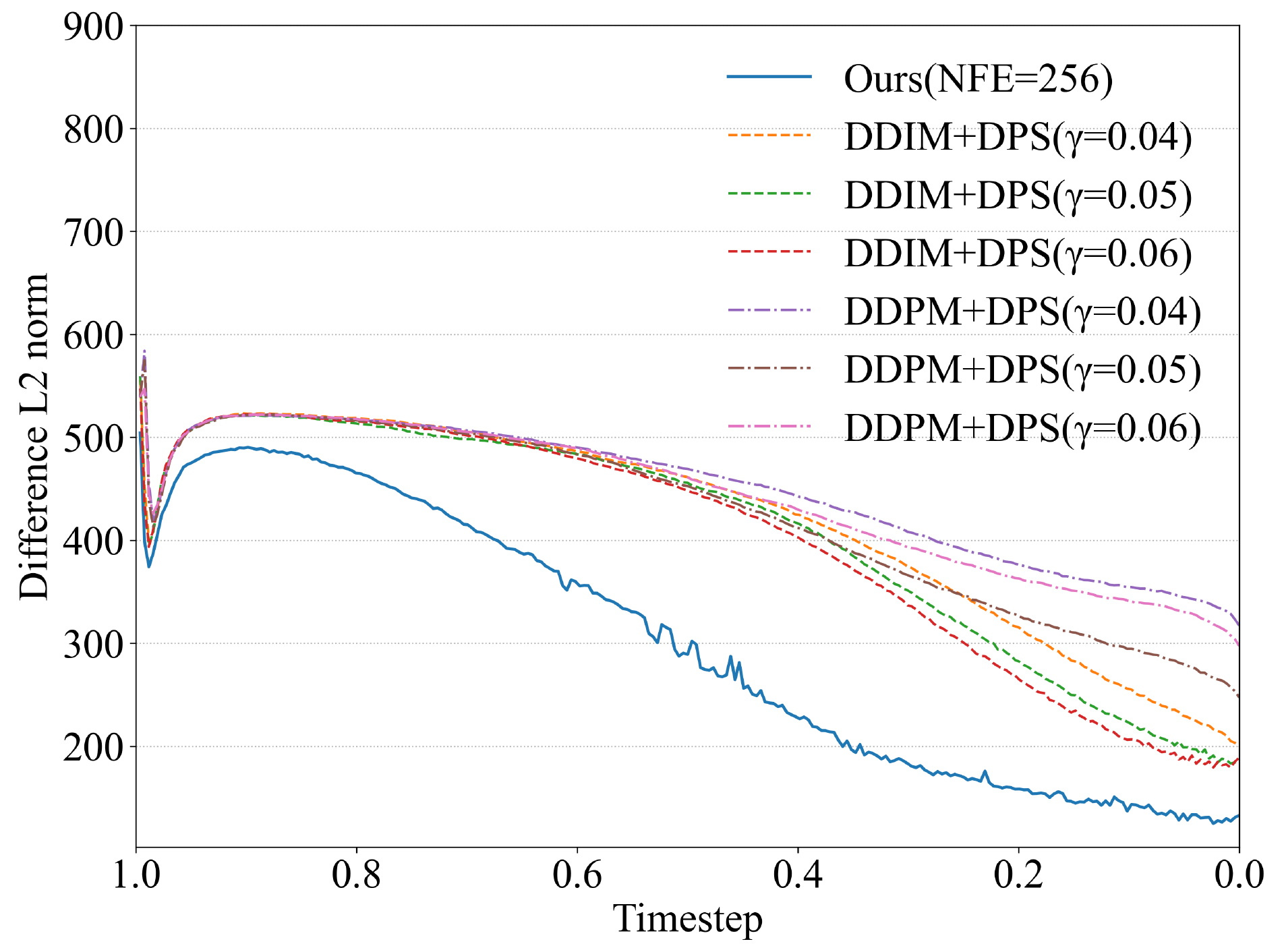}
    \captionof{figure}{Convergence analysis during sampling. $\gamma$ is step size of DPS-update. The plot shows the L2 norm difference between the reference image and the rendered image ($\|\boldsymbol{y}-\mathcal{R}(\hat{\boldsymbol{X}}_{0|t})\|_2$) over diffusion timesteps for our method and other sampling approaches, illustrating more stable convergence of our FCM-based method.
    }
    \label{fig:method_comparison}
    \end{minipage}
    \hfill
  \begin{minipage}{0.48\textwidth}
    \centering
    {\small
    \begin{tabular}{c|ccc}
    \toprule
    \textit{Views}  & EMD($\times10$) &CD($\times10$) & F-score \\ \midrule
         1 & 0.587 & 0.561 & 0.382 \\ 
         3 & 0.436 & 0.386 & 0.512 \\ 
         5 & \textbf{0.425} & \textbf{0.361} & \textbf{0.548} \\ 
    \bottomrule
    \end{tabular}
    }
    \captionof{table}{Impact of the number of input views on reconstruction performance. The table presents scores for reconstructions using 1, 3, and 5 views, showing improved quality with additional views.}
    \label{tab:multi_view_comparison}
    \hfill
    {\small
    \begin{tabular}{c|ccc}
    \toprule
    \textit{Method}  & EMD(\(\times10\)) & CD(\(\times10\)) & F-score \\ \midrule
        DDPM + DPS   & 0.674 & 0.688 & 0.337 \\ 
        DDIM + DPS   & 0.716 & 0.728 & 0.312 \\ 
        Ours   & \textbf{0.587} & \textbf{0.561} & \textbf{0.382} \\ 
    \bottomrule
    \end{tabular}
    }
    \captionof{table}{Comparison with DPS-based methods. The table presents reconstruction metrics for our method versus DDPM+DPS and DDIM+DPS, demonstrating our approach's superior performance with fewer NFEs.}
    \label{tab:method_comparison}
  \end{minipage}
\end{figure}

\subsection{Qualitative Results}
In Fig.~\ref{fig:shapenet_comparison} we show the reconstructed point clouds of different models using the ShapeNet dataset. In Fig.~\ref{fig:co3d_comparison} we present a comparison of the rendered results of reconstructed colored point clouds using the CO3D dataset. Other models fail to accurately follow the given image in their rendering results for the reference view, instead focusing on generating a plausible object within the learned category. However, our method achieves the highest level of detail for the reference image.
\subsection{Adaptivity Analysis}

\label{subsec:adaptivity}
Our method has the advantage of performing various tasks without requiring retraining of the model. In this section, we demonstrate this capability through multi-view reconstruction and depth map reconstruction. The models used in this section are the same as those used in the previous section for the ShapeNet dataset. Fig.~\ref{fig:multi_view_comparison} and Tab.~\ref{tab:multi_view_comparison} illustrate the effectiveness of our method in multi-view reconstruction. As the number of views increases, the generated point cloud becomes more refined, demonstrating the improved quality of reconstruction. Fig.~\ref{fig:depth_map} presents the results of applying our method to depth maps rendered using Eq.~\ref{eq11}. The results show high fidelity to the reference depth map and the ability to generate natural-looking objects.

\subsection{Ablation Study}

To show the effectiveness of our method, we compare with other DPS-based methods. Fig.~\ref{fig:method_comparison} and Tab.~\ref{tab:method_comparison} compare our method with DPS-based approaches. Fig.~\ref{fig:method_comparison} presents the plot of the difference in L2 norm between the reference image and the rendered image during the sampling process over timesteps. We observed that both DDPM+DPS and DDIM+DPS methods achieve their best performance at the step size of 0.05. The reason DPS-based methods struggle to follow the reference image is that they update with a fixed step size, leading to suboptimal convergence. It demonstrates that our method converges more optimally compared to other approaches. As shown in Tab.~\ref{tab:method_comparison}, our method achieves the best point cloud reconstruction performance. Since the DDPM sampling process does not approximate $\hat{\mathbf{X}}_0$, and the iterative FCM updates from noisy $\mathbf{X}_t$ using measurement $\mathbf{y}$ are not ideal, we exclude the DDPM+FCM scheme from our comparison. Qualitative comparisons with DPS-based methods are provided in the appendix.

\section{Conclusion}
\label{sec:conclusion}
In this paper, we proposed the novel point cloud diffusion sampling approach for adaptive 3D reconstruction. Our method reconstructs the colored point cloud by updating it using likelihood $\nabla \log p(\mathbf{y}\mid\mathbf{X})$ with given images through FCM during the reverse process of the point cloud diffusion model. In our experiments, we qualitatively demonstrate high-fidelity reconstruction of reference images with color, generating high-quality point cloud structures compared to prior works. Moreover, we quantitatively surpass previous works in point cloud reconstruction performance. Our method is applicable to various tasks, demonstrating its versatility. Additionally, it can be extended to different domains (e.g., Gaussian Splatting, meshes, etc.), highlighting its adaptability. As future work, we are interested in exploring larger datasets across diverse domains.
\section*{Acknowledgement}
This work was supported in part by the National Research Foundation of Korea (NRF) grant funded by the Korea government (MSIT) (RS-2025-24683103), in part by Korea Basic Science Institute (National research Facilities and Equipment Center) grant funded by the Ministry of Science and ICT (No. RS-2024-00401899), in part by Institute of Information \& communications Technology Planning \& Evaluation (IITP) under the Leading Generative AI Human Resources Development (IITP-2025-RS-2024-00360227) grant funded by the Korea government (MSIT), and in part by Institute of Information \& communications Technology Planning \& Evaluation (IITP) grant funded by the Korea government (MSIT) (No.RS-2022-00155915, Artificial Intelligence Convergence Innovation Human Resources Development (Inha University)).

{
    \small
    \bibliographystyle{plain}
    \bibliography{main}
}

\appendix

\clearpage
\setcounter{page}{1}
\renewcommand{\thesection}{A.\arabic{section}}

\setcounter{section}{0}

\section{Algorithm}
\begin{algorithm}[!hb]
  \caption{DDIM Sampling with FCM Likelihood Update}
  \label{alg1}
  \begin{algorithmic}[1]
    \Require
        Trained noise predictor $\boldsymbol{\epsilon}_{\theta^\star}$,
        Measurement $\mathbf y$,
        Total diffusion steps $T$,
        DDIM cumulative schedule $\{\bar{\alpha}_t\}_{t=1}^T$,
     Stochasticity coefficient $\eta$,
        Differentiable renderer $\mathcal R$,
     FCM hyper‑parameters $\delta_0,\,\eta_{\mathrm{FCM}}$, Lipschitz bound $L$,
        Numerical stabilizer $\varepsilon$
    
    \State $\boldsymbol{X}_T\sim\mathcal N(\mathbf 0,\mathbf I)$ \Comment{initialize with noise}

    \For{$t=T$ \textbf{downto} $1$}
      \Statex \hspace{0.5em}\textbf{(a) DDIM prior prediction}
      \State $\hat{\boldsymbol{\epsilon}}_t\gets\boldsymbol{\epsilon}_{\theta^\star}(\boldsymbol{X}_t, t)$
      \State $\hat{\boldsymbol{X}}_{0\mid t}\gets\bigl(\boldsymbol{X}_t-\sqrt{1-\bar{\alpha}_t}\,\hat{\boldsymbol{\epsilon}}_t\bigr)/\sqrt{\bar{\alpha}_t}$

      \Statex \hspace{0.5em}\textbf{(b) FCM likelihood refinement}
      \State $\mathbf{x}_0\gets\hat{\boldsymbol{X}}_{0\mid t}$
      \For{$k=0$ \textbf{to} $K-1$} \Comment{$K{=}4$ outer refinements}
        \State $\mathbf{g}_k\gets\nabla_{\mathbf{x}_k}\,\bigl\|\mathbf y-\mathcal R(\mathbf{x}_k)\bigr\|_2$
        \State $\delta_k\gets\delta_0\,\|\mathbf{x}_k\|/\|\mathbf{g}_k\|$
        \State $\mathbf{x}_k'\gets\mathbf{x}_k-\delta_k\,\mathbf{g}_k$
        \State $\mathbf{g}_k'\gets\nabla_{\mathbf{x}_k'}\,\bigl\|\mathbf y-\mathcal R(\mathbf{x}_k')\bigr\|_2$
        \State $\mathbf{h}_k\gets(\mathbf{g}_k-\mathbf{g}_k')/\delta_k$
        \State $\alpha_k^{\mathrm{raw}}\gets\|\mathbf{g}_k\|^2\big/\bigl(\langle\mathbf{g}_k,\mathbf{h}_k\rangle+\varepsilon\bigr)$
        \State $\alpha_k\gets\min\{\alpha_k^{\mathrm{raw}},\;1/L\}$
        \State $\tilde{\mathbf{x}}_k\gets\mathbf{x}_k-\alpha_k\,\mathbf{g}_k$
        \If{$\bigl\|\mathbf y-\mathcal R(\tilde{\mathbf{x}}_k)\bigr\|_2>
              \bigl\|\mathbf y-\mathcal R(\mathbf{x}_k)\bigr\|_2-\eta_{\mathrm{FCM}}\,\alpha_k\,\|\mathbf{g}_k\|^2$}
          \State $\alpha_k\gets\alpha_k/2$ \Comment{single Armijo back–off}
          \State $\tilde{\mathbf{x}}_k\gets\mathbf{x}_k-\alpha_k\,\mathbf{g}_k$
        \EndIf
        \State $\mathbf{x}_{k+1}\gets\tilde{\mathbf{x}}_k$ \Comment{update iterate}
      \EndFor
      \State $\tilde{\boldsymbol{X}}_{0\mid t}\gets\mathbf{x}_{K}$

      \Statex \hspace{0.5em}\textbf{(c) DDIM update}
      \State $\sigma_t\gets\eta\sqrt{\tfrac{1-\bar{\alpha}_{t-1}}{1-\bar{\alpha}_t}}\,\sqrt{1-\tfrac{\bar{\alpha}_t}{\bar{\alpha}_{t-1}}}$
      \State $\boldsymbol{\epsilon}\sim\mathcal N(\mathbf 0,\mathbf I)$
      \State $\boldsymbol{X}_{t-1}\gets\sqrt{\bar{\alpha}_{t-1}}\,\tilde{\boldsymbol{X}}_{0\mid t}
              +\sqrt{1-\bar{\alpha}_{t-1}-\sigma_t^2}\,\hat{\boldsymbol{\epsilon}}_t
              +\sigma_t\,\boldsymbol{\epsilon}$
    \EndFor

    \State $\boldsymbol{X}_0 
\gets \bigl(\boldsymbol{X}_1 - \sqrt{1 - \bar\alpha_1}\,\boldsymbol{\epsilon}_{\theta^*}(\boldsymbol{X}_1)\bigr) 
       / \sqrt{\bar\alpha_1}$
 
\State \textbf{return} $\boldsymbol{X}_0$
  \end{algorithmic}
\end{algorithm}

\section{Implementation Details}
\label{sec:implementation}
To model the reverse process $p_\theta$, we employed a Diffusion Transformer, originally introduced in Point-E's unconditional model \cite{Nichol2022PointEAS}, as the neural network parameterized by $\theta$, which predicts both $\mu_\theta$ and $\Sigma_\theta$. All images were set to a resolution of 224×224. For the ShapeNet dataset, 2,048 points were sampled and then upsampled to 8,192 points for comparison \cite{Rong2024RepKPUPC}. In the case of CO3D, 8,192 points were directly sampled. To sufficiently refine the point cloud, we perform four FCM updates per DDIM sampling step. We set the hyperparameters as follows: $\eta_{FCM}=10^{-4},\;L=2/3$. For ShapeNet, we set $T=256$ and $\delta_0= 2\times10^{-2}$. For CO3D, we set $T=512$ and $\delta_0= 6\times10^{-3}$. We use point cloud rendering processes provided by PyTorch3D \cite{Ravi2020Accelerating3D}. For ShapeNet, we set the radius of the point cloud rasterizer to 0.018 for airplane category and 0.027 for the other categories. For CO3D, we set the radius to 0.013. All experiments were performed using an NVIDIA RTX 6000 Ada Generation with a batch size of 16.

\section{Theoretical Analysis of FCM}
\label{sec:appendix_fcm}
 
In this appendix, we provide detailed proofs for the theoretical guarantees of our Forward Curvature-Matching (FCM) method. We begin by formally establishing the properties of FCM step sizes, followed by proofs of loss decrease and convergence guarantees. Finally, we analyze how FCM integrates with DDIM sampling while preserving its contraction properties.
 
\subsection{Bounds on FCM Step Sizes}
\label{subsec:step_bounds}
 
We first establish that the FCM step size is guaranteed to lie within a well-behaved range, ensuring stable iterations. Our FCM approach relies on a directional curvature estimate:
 
\begin{align}
\delta_k &= \delta_0 \cdot \frac{\|\mathbf{x}_k\|}{\|\mathbf{g}_k\|}, \\
\mathbf{x}_k' &= \mathbf{x}_k - \delta_k \cdot \mathbf{g}_k, \\
\mathbf{g}_k' &= \nabla\mathcal{L}(\mathbf{x}_k'), \\
\mathbf{h}_k &= \frac{\mathbf{g}_k - \mathbf{g}_k'}{\delta_k}
\end{align}
 
This $\mathbf{h}_k$ approximates the directional curvature along the gradient direction. Specifically, $\mathbf{h}_k$ is an approximation of the Hessian-vector product $\nabla^2\mathcal{L}(\mathbf{x}_k)\mathbf{g}_k$.
\begin{lemma}[Step Size Bounds]
\label{lem:alpha}
Assume $\varepsilon \leq L\|\mathbf{g}_k\|^2$ in the calculation of $\alpha_k^{\text{raw}}$. Then the FCM step size $\alpha_k$ (prior to any Armijo halving) satisfies:
\begin{equation}
\frac{1}{2L} \leq \alpha_k \leq \frac{1}{L}
\end{equation}
\end{lemma}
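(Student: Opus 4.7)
The upper bound $\alpha_k \le 1/L$ is immediate from the definition $\alpha_k = \min\{\alpha_k^{\text{raw}}, 1/L\}$, so the real content is the lower bound $\alpha_k \ge 1/(2L)$. Because $1/(2L) < 1/L$, it suffices to show that the uncapped value $\alpha_k^{\text{raw}} = \|\mathbf{g}_k\|^2 / \bigl(\langle \mathbf{g}_k, \mathbf{h}_k\rangle + \varepsilon\bigr)$ is at least $1/(2L)$; equivalently, I need the upper bound
\[
\langle \mathbf{g}_k, \mathbf{h}_k\rangle + \varepsilon \;\le\; 2L\,\|\mathbf{g}_k\|^2 .
\]
My plan is to bound the two summands separately and then invoke the standing hypothesis $\varepsilon \le L\|\mathbf{g}_k\|^2$ to close the gap.

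\paragraph{Bounding the curvature term.}
First I would control $\mathbf{h}_k$ in norm. By construction $\mathbf{h}_k = (\mathbf{g}_k - \mathbf{g}_k')/\delta_k$ with $\mathbf{g}_k' = \nabla\mathcal{L}(\mathbf{x}_k - \delta_k \mathbf{g}_k)$, so Assumption~\ref{ass:smooth} ($L$-smoothness) gives
\[
\|\mathbf{h}_k\| \;=\; \frac{\|\nabla\mathcal{L}(\mathbf{x}_k) - \nabla\mathcal{L}(\mathbf{x}_k')\|}{\delta_k} \;\le\; \frac{L\,\|\mathbf{x}_k - \mathbf{x}_k'\|}{\delta_k} \;=\; L\,\|\mathbf{g}_k\|.
\]
Then Cauchy--Schwarz yields $\langle \mathbf{g}_k, \mathbf{h}_k\rangle \le \|\mathbf{g}_k\|\,\|\mathbf{h}_k\| \le L\,\|\mathbf{g}_k\|^2$. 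Combined with the hypothesis on $\varepsilon$, the denominator of $\alpha_k^{\text{raw}}$ is bounded above by $2L\,\|\mathbf{g}_k\|^2$, giving the lower bound $\alpha_k^{\text{raw}} \ge 1/(2L)$.

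\paragraph{Well-posedness of the ratio.}
The remaining subtlety is to rule out a vanishing or negative denominator, so that $\alpha_k^{\text{raw}}$ is genuinely a positive real number lying in the claimed interval. I would invoke Assumption~\ref{ass:convex}: on the iterate set $\mathcal{L}$ is convex, so the monotonicity of the gradient gives
\[
\langle \nabla\mathcal{L}(\mathbf{x}_k) - \nabla\mathcal{L}(\mathbf{x}_k'),\, \mathbf{x}_k - \mathbf{x}_k'\rangle \;\ge\; 0,
\]
which, after substituting $\mathbf{x}_k - \mathbf{x}_k' = \delta_k\,\mathbf{g}_k$ and dividing by $\delta_k^2 > 0$, yields $\langle \mathbf{g}_k, \mathbf{h}_k\rangle \ge 0$. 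Hence the denominator is at least $\varepsilon > 0$, the division is safe, and $\alpha_k^{\text{raw}}>0$.

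\paragraph{Expected obstacle.}
The only delicate point is whether convexity is really needed or whether the $\varepsilon$ safeguard alone keeps the denominator positive without accidentally inflating $\alpha_k^{\text{raw}}$ beyond $1/L$; I expect this is exactly why the lemma's hypothesis is phrased as $\varepsilon \le L\|\mathbf{g}_k\|^2$ rather than merely $\varepsilon > 0$. Once I commit to using Assumption~\ref{ass:convex} for positivity and the $\varepsilon$-bound plus $L$-smoothness for the upper estimate, everything collapses into the two lines above and the lemma follows by combining $\alpha_k^{\text{raw}} \ge 1/(2L)$ with the cap $\alpha_k \le 1/L$.
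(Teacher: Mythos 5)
Your proof is correct and follows essentially the same route as the paper's: $L$-smoothness (via the displacement $\|\mathbf{x}_k-\mathbf{x}_k'\|=\delta_k\|\mathbf{g}_k\|$) plus Cauchy--Schwarz gives $\langle\mathbf{g}_k,\mathbf{h}_k\rangle \le L\|\mathbf{g}_k\|^2$, the hypothesis $\varepsilon \le L\|\mathbf{g}_k\|^2$ bounds the denominator by $2L\|\mathbf{g}_k\|^2$, and the cap supplies $\alpha_k \le 1/L$. Your explicit positivity check of the denominator via Assumption~\ref{ass:convex} (gradient monotonicity giving $\langle\mathbf{g}_k,\mathbf{h}_k\rangle \ge 0$) is a small but legitimate addition that the paper leaves implicit, and it is indeed needed for the step from the denominator estimate to $\alpha_k^{\text{raw}} \ge 1/(2L)$.
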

 
\begin{proof}
From the finite difference approximation with $\mathbf{h}_k = (\mathbf{g}_k - \mathbf{g}_k')/\delta_k$, we analyze $\langle\mathbf{g}_k, \mathbf{h}_k\rangle$:
 
\begin{align}
\langle\mathbf{g}_k, \mathbf{h}_k\rangle &= \left\langle\mathbf{g}_k, \frac{\mathbf{g}_k - \mathbf{g}_k'}{\delta_k}\right\rangle \\
&= \frac{1}{\delta_k}(\|\mathbf{g}_k\|^2 - \langle\mathbf{g}_k, \mathbf{g}_k'\rangle)
\end{align}
 
Under Assumption~\ref{ass:smooth} ($L$-smoothness), we can establish that:
\begin{align}
\langle\mathbf{g}_k, \mathbf{g}_k'\rangle &\geq \|\mathbf{g}_k\|^2 - L\delta_k\|\mathbf{g}_k\|^2
\end{align}
 
This implies:
\begin{align}
\langle\mathbf{g}_k, \mathbf{h}_k\rangle &\leq \frac{1}{\delta_k}(\|\mathbf{g}_k\|^2 - (\|\mathbf{g}_k\|^2 - L\delta_k\|\mathbf{g}_k\|^2)) \\
&= L\|\mathbf{g}_k\|^2
\end{align}
 
Therefore:
\begin{align}
\langle\mathbf{g}_k, \mathbf{h}_k\rangle + \varepsilon
&\leq L\|\mathbf{g}_k\|^2 + \varepsilon\\
&\leq 2L\|\mathbf{g}_k\|^2
\end{align}
 
where the last inequality holds given our assumption that $\varepsilon \leq L\|\mathbf{g}_k\|^2$. This implies:
\begin{align}
\alpha_k^{\text{raw}} = \frac{\|\mathbf{g}_k\|^2}{\langle\mathbf{g}_k, \mathbf{h}_k\rangle + \varepsilon} \geq \frac{1}{2L}
\end{align}
 
Since we cap $\alpha_k = \min\{\alpha_k^{\text{raw}}, 1/L\}$, we ensure $\alpha_k \leq 1/L$ while maintaining the lower bound $\alpha_k \geq 1/(2L)$.
\end{proof}
 
\begin{remark}
If $\|\mathbf{g}_k\| \approx 0$, the raw step size $\alpha_k^{\text{raw}}$ could become very large. However, in such cases, the Armijo condition will catch insufficient decrease and halve the step size once, still ensuring stable updates.
\end{remark}
 
\subsection{Firm Non-Expansiveness of the Gradient Step}
\label{subsec:nonexpansive}
 
Next, we establish that a gradient step with the FCM step size is firmly non-expansive, which is crucial for integrating with the diffusion process.
 
\begin{lemma}[Firmly Non-Expansive Gradient Step]
\label{lem:nonexp}
Let $T_k(\mathbf{u}) = \mathbf{u} - \alpha_k\nabla\mathcal{L}(\mathbf{u})$ be the gradient step operator with fixed $\alpha_k > 0$. Under Assumptions \ref{ass:smooth}--\ref{ass:convex}, if $0 < \alpha_k < 2/L$, then $T_k$ is firmly non-expansive:
\begin{equation}
\|T_k(\mathbf{u}) - T_k(\mathbf{v})\|^2 \leq \|\mathbf{u}-\mathbf{v}\|^2 - \alpha_k\left(\frac{2}{L}-\alpha_k\right)\|\nabla\mathcal{L}(\mathbf{u})-\nabla\mathcal{L}(\mathbf{v})\|^2
\end{equation}
Hence, $T_k$ is in particular non-expansive: $\|T_k(\mathbf{u})-T_k(\mathbf{v})\| \leq \|\mathbf{u}-\mathbf{v}\|$.
\end{lemma}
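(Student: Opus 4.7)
The plan is to derive the firm non-expansiveness bound from the Baillon--Haddad cocoercivity property, which is the standard bridge from ``convex $+$ $L$-smooth'' to the quadratic lower bound needed here. Under Assumptions \ref{ass:smooth} and \ref{ass:convex}, one has
\begin{equation*}
\langle \nabla\mathcal{L}(\mathbf{u})-\nabla\mathcal{L}(\mathbf{v}),\,\mathbf{u}-\mathbf{v}\rangle \;\geq\; \tfrac{1}{L}\|\nabla\mathcal{L}(\mathbf{u})-\nabla\mathcal{L}(\mathbf{v})\|^{2}.
\end{equation*}
This is the single structural inequality that turns the cross term in the expansion of $\|T_k(\mathbf{u})-T_k(\mathbf{v})\|^{2}$ into something that can absorb the quadratic-in-$\alpha_k$ remainder.

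Next, I would expand the target quantity directly. Writing $\mathbf{d} := \nabla\mathcal{L}(\mathbf{u}) - \nabla\mathcal{L}(\mathbf{v})$,
\begin{equation*}
\|T_k(\mathbf{u})-T_k(\mathbf{v})\|^{2} \;=\; \|\mathbf{u}-\mathbf{v}\|^{2} \;-\; 2\alpha_k\,\langle \mathbf{u}-\mathbf{v},\,\mathbf{d}\rangle \;+\; \alpha_k^{2}\|\mathbf{d}\|^{2}.
\end{equation*}
Substituting the cocoercivity bound into the middle term and collecting the coefficient of $\|\mathbf{d}\|^{2}$ produces exactly $-\alpha_k(2/L - \alpha_k)\,\|\mathbf{d}\|^{2}$, which is the stated firm non-expansive inequality. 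The condition $0 < \alpha_k < 2/L$ then makes $\alpha_k(2/L-\alpha_k) \geq 0$, so the residual term is nonpositive and ordinary non-expansiveness $\|T_k(\mathbf{u})-T_k(\mathbf{v})\| \leq \|\mathbf{u}-\mathbf{v}\|$ follows as an immediate corollary.

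The main subtlety, rather than a true obstacle, is justifying the Baillon--Haddad step under Assumption \ref{ass:convex}, which posits only \emph{local} convexity on the set of iterates instead of global convexity on the ambient space. I would handle this by restricting the statement to pairs $(\mathbf{u},\mathbf{v})$ lying in the convex region containing the DDIM$+$FCM trajectory, where Assumption \ref{ass:smooth} supplies the Lipschitz constant $L$ and Assumption \ref{ass:convex} supplies convexity; this is exactly the regime in which Lemma \ref{lem:nonexp} is invoked to prove Proposition \ref{prop:contraction}. A brief remark would make this localization explicit so that the classical cocoercivity proof (via the Bregman-divergence / descent-lemma argument) transfers verbatim to the relevant domain. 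Beyond that, the argument is a one-line algebraic expansion and needs no further machinery.
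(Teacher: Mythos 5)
Your proof is correct and follows essentially the same route as the paper's: expand $\|T_k(\mathbf{u})-T_k(\mathbf{v})\|^2$, apply the Baillon--Haddad $1/L$-cocoercivity of $\nabla\mathcal{L}$ to the cross term, and read off the factor $-\alpha_k(2/L-\alpha_k)\|\nabla\mathcal{L}(\mathbf{u})-\nabla\mathcal{L}(\mathbf{v})\|^2$. Your added care in restricting the cocoercivity step to the convex region containing the iterates is a reasonable refinement of Assumption~\ref{ass:convex} that the paper leaves implicit, but it does not change the argument.
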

 
\begin{proof}
Let $\Delta = \mathbf{u} - \mathbf{v}$ and $\Delta_g = \nabla\mathcal{L}(\mathbf{u}) - \nabla\mathcal{L}(\mathbf{v})$. Then:
\begin{align}
\|T_k(\mathbf{u}) - T_k(\mathbf{v})\|^2 &= \|\Delta - \alpha_k\Delta_g\|^2\\
&= \|\Delta\|^2 - 2\alpha_k\langle\Delta, \Delta_g\rangle + \alpha_k^2\|\Delta_g\|^2
\end{align}
 
By the Baillon–Haddad theorem (which applies when $\mathcal{L}$ is convex and $L$-smooth), $\nabla\mathcal{L}$ is $1/L$-cocoercive, meaning:
\begin{equation}
\langle\Delta, \Delta_g\rangle \geq \frac{1}{L}\|\Delta_g\|^2
\end{equation}
 
Substituting this into our expression:
\begin{align}
\|T_k(\mathbf{u}) - T_k(\mathbf{v})\|^2 &\leq \|\Delta\|^2 - \frac{2\alpha_k}{L}\|\Delta_g\|^2 + \alpha_k^2\|\Delta_g\|^2\\
&= \|\Delta\|^2 - \alpha_k\left(\frac{2}{L} - \alpha_k\right)\|\Delta_g\|^2
\end{align}
 
Since $0 < \alpha_k \leq 1/L$ in our FCM algorithm (as established in Lemma~\ref{lem:alpha}), the factor $\frac{2}{L} - \alpha_k > 0$. Thus, $T_k$ is firmly non-expansive, and consequently, $\|T_k(\mathbf{u}) - T_k(\mathbf{v})\| \leq \|\mathbf{u} - \mathbf{v}\|$.
\end{proof}
 
\subsection{Guaranteed Loss Decrease}
\label{subsec:loss_decrease}
 
We now prove Theorem~\ref{thm:decrease} from the main paper, which guarantees that FCM decreases the measurement loss at each iteration.
 
\begin{theorem}[Guaranteed Loss Decrease]
\label{thm:decrease_appendix}
Let $c = \min\{\frac{\eta_{\text{FCM}}}{2L}, \frac{1}{8L}\}$. The FCM algorithm ensures:
\begin{equation}
\mathcal{L}(\mathbf{x}_{k+1}) \leq \mathcal{L}(\mathbf{x}_k) - c\|\nabla\mathcal{L}(\mathbf{x}_k)\|^2
\end{equation}
\end{theorem}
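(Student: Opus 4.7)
The plan is to combine the step-size bounds already established in Lemma~\ref{lem:alpha} with the standard $L$-smoothness descent lemma, and to split on whether the one-shot Armijo back-off is triggered. The descent lemma gives, for any step $\alpha>0$,
\begin{equation}
\mathcal{L}(\mathbf{x}_k-\alpha\mathbf{g}_k)\;\le\;\mathcal{L}(\mathbf{x}_k)-\alpha\Bigl(1-\tfrac{L\alpha}{2}\Bigr)\|\mathbf{g}_k\|^2,
\end{equation}
so the whole argument reduces to tracking what value of $\alpha$ the algorithm actually applies, and lower-bounding $\alpha\bigl(1-\tfrac{L\alpha}{2}\bigr)$ in each branch.

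First I would recall, via Lemma~\ref{lem:alpha}, that before any halving we have $\alpha_k\in[\tfrac{1}{2L},\tfrac{1}{L}]$. Then I would split into two cases according to Algorithm~\ref{alg1}, line 15.

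In \textbf{Case 1} (Armijo satisfied), the accepted iterate $\mathbf{x}_{k+1}=\mathbf{x}_k-\alpha_k\mathbf{g}_k$ directly satisfies
\begin{equation}
\mathcal{L}(\mathbf{x}_{k+1})\;\le\;\mathcal{L}(\mathbf{x}_k)-\eta_{\mathrm{FCM}}\,\alpha_k\,\|\mathbf{g}_k\|^2
\;\le\;\mathcal{L}(\mathbf{x}_k)-\tfrac{\eta_{\mathrm{FCM}}}{2L}\,\|\mathbf{g}_k\|^2,
\end{equation}
using $\alpha_k\ge 1/(2L)$. In \textbf{Case 2} (Armijo fails and the step is halved once), the applied step is $\alpha_k'=\alpha_k/2\in[\tfrac{1}{4L},\tfrac{1}{2L}]$. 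Plugging this into the descent lemma and using $L\alpha_k'/2\le 1/4$,
\begin{equation}
\mathcal{L}(\mathbf{x}_{k+1})\;\le\;\mathcal{L}(\mathbf{x}_k)-\alpha_k'\,\tfrac{3}{4}\,\|\mathbf{g}_k\|^2
\;\le\;\mathcal{L}(\mathbf{x}_k)-\tfrac{3}{16L}\,\|\mathbf{g}_k\|^2
\;\le\;\mathcal{L}(\mathbf{x}_k)-\tfrac{1}{8L}\,\|\mathbf{g}_k\|^2.
\end{equation}
Taking the worse of the two constants yields $c=\min\{\tfrac{\eta_{\mathrm{FCM}}}{2L},\tfrac{1}{8L}\}$ and completes the proof.

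The main obstacle is justifying Case 2 cleanly, since the algorithm accepts the halved step \emph{without} rechecking Armijo. The point is that we no longer need Armijo: the descent lemma together with the upper bound $\alpha_k'\le 1/(2L)$ already forces a guaranteed decrease, and the fact that $\alpha_k'\ge 1/(4L)$ (inherited from Lemma~\ref{lem:alpha}) turns this into a quantitative $\Theta(1/L)$ decrease. A minor technical point is the edge case $\|\mathbf{g}_k\|\approx 0$ where $\alpha_k^{\mathrm{raw}}$ could blow up; because of the cap $\alpha_k\le 1/L$ and the fact that Case~2 only further reduces the step, the analysis above still applies verbatim. The choice of $c$ as the minimum of the two case-wise constants absorbs any mismatch between the Armijo-driven and smoothness-driven bounds.
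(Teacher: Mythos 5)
Your proof is correct and follows essentially the same route as the paper's: the same two-case split on the one-shot Armijo back-off, using the Armijo inequality with $\alpha_k\ge 1/(2L)$ in the accepted case and the $L$-smoothness descent lemma with $\alpha_k/2\in[\tfrac{1}{4L},\tfrac{1}{2L}]$ in the halved case. Your constant $3/4$ in Case 2 is slightly sharper than the paper's $1/2$ (giving $\tfrac{3}{16L}$ before relaxing to $\tfrac{1}{8L}$), but the argument is otherwise identical.
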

 
\begin{proof}
We consider two cases:
 
\textbf{Case 1 (Armijo condition satisfied):} When the initial step satisfies the Armijo condition, we have:
\begin{align}
\mathcal{L}(\mathbf{x}_{k+1}) &\leq \mathcal{L}(\mathbf{x}_k) - \eta_{\text{FCM}}\alpha_k\|\mathbf{g}_k\|^2\\
&\leq \mathcal{L}(\mathbf{x}_k) - \frac{\eta_{\text{FCM}}}{2L}\|\mathbf{g}_k\|^2
\end{align}
where we used the lower bound $\alpha_k \geq \frac{1}{2L}$ from Lemma~\ref{lem:alpha}.
 
\textbf{Case 2 (Armijo halving required):} If the initial step fails the Armijo condition and we halve $\alpha_k$, then $\alpha_k \geq \frac{1}{4L}$ remains. By the descent lemma for $L$-smooth functions:
\begin{align}
\mathcal{L}(\mathbf{x}_{k+1}) &\leq \mathcal{L}(\mathbf{x}_k) - \alpha_k\|\mathbf{g}_k\|^2 + \frac{L}{2}\alpha_k^2\|\mathbf{g}_k\|^2\\
&= \mathcal{L}(\mathbf{x}_k) - \alpha_k\left(1 - \frac{L\alpha_k}{2}\right)\|\mathbf{g}_k\|^2
\end{align}
 
Since $\alpha_k \leq \frac{1}{2L}$ after halving, we have $1 - \frac{L\alpha_k}{2} \geq \frac{1}{2}$. Combined with $\alpha_k \geq \frac{1}{4L}$, this gives:
\begin{align}
\mathcal{L}(\mathbf{x}_{k+1}) &\leq \mathcal{L}(\mathbf{x}_k) - \frac{\alpha_k}{2}\|\mathbf{g}_k\|^2\\
&\leq \mathcal{L}(\mathbf{x}_k) - \frac{1}{8L}\|\mathbf{g}_k\|^2
\end{align}
 
Taking the minimum of the guaranteed decrease in both cases, we get:
\begin{equation}
\mathcal{L}(\mathbf{x}_{k+1}) \leq \mathcal{L}(\mathbf{x}_k) - \min\left\{\frac{\eta_{\text{FCM}}}{2L}, \frac{1}{8L}\right\}\|\mathbf{g}_k\|^2
\end{equation}
\end{proof}
 
\begin{corollary}[Gradient Norm Convergence]
\label{cor:grad_zero}
Under FCM iterations, $\|\nabla\mathcal{L}(\mathbf{x}_k)\| \to 0$ as $k \to \infty$, and every cluster point is stationary.
\end{corollary}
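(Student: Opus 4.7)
The plan is to deduce the corollary from the per-step descent guarantee of Theorem~\ref{thm:decrease_appendix} via a telescoping argument, then combine with the lower-bound assumption to extract summability of the squared gradient norms. Concretely, starting from
\begin{equation}
\mathcal{L}(\mathbf{x}_{k+1}) \;\leq\; \mathcal{L}(\mathbf{x}_k) - c\,\|\nabla\mathcal{L}(\mathbf{x}_k)\|^2,
\end{equation}
I would sum both sides from $k=0$ to $N-1$, rearrange to obtain
\begin{equation}
c\sum_{k=0}^{N-1}\|\nabla\mathcal{L}(\mathbf{x}_k)\|^2 \;\leq\; \mathcal{L}(\mathbf{x}_0) - \mathcal{L}(\mathbf{x}_N),
\end{equation}
and use Assumption~\ref{ass:lower} ($\mathcal{L}_{\inf} > -\infty$) to bound the right-hand side by $\mathcal{L}(\mathbf{x}_0) - \mathcal{L}_{\inf} < \infty$ uniformly in $N$.

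Letting $N\to\infty$, the partial sums $\sum_{k=0}^{N-1}\|\nabla\mathcal{L}(\mathbf{x}_k)\|^2$ are monotone and bounded, hence the series $\sum_{k=0}^{\infty}\|\nabla\mathcal{L}(\mathbf{x}_k)\|^2$ converges. A convergent series of nonnegative terms must have terms tending to zero, so $\|\nabla\mathcal{L}(\mathbf{x}_k)\|^2 \to 0$ and therefore $\|\nabla\mathcal{L}(\mathbf{x}_k)\| \to 0$ as $k\to\infty$.

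For the cluster-point claim, I would proceed by continuity. Suppose $\mathbf{x}^\star$ is a cluster point of the sequence $\{\mathbf{x}_k\}$, so there is a subsequence $\mathbf{x}_{k_j} \to \mathbf{x}^\star$. By Assumption~\ref{ass:smooth}, $\nabla\mathcal{L}$ is $L$-Lipschitz and hence continuous, so $\nabla\mathcal{L}(\mathbf{x}_{k_j}) \to \nabla\mathcal{L}(\mathbf{x}^\star)$. Combined with $\|\nabla\mathcal{L}(\mathbf{x}_{k_j})\|\to 0$ from the first part, this forces $\nabla\mathcal{L}(\mathbf{x}^\star)=\mathbf{0}$, i.e., every cluster point is stationary.

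The proof is essentially a standard descent-lemma-plus-summability argument, and the only subtlety is making sure the constant $c = \min\{\eta_{\mathrm{FCM}}/(2L),\,1/(8L)\}$ from Theorem~\ref{thm:decrease_appendix} is strictly positive so that the rearrangement in the telescoping step is valid; since $\eta_{\mathrm{FCM}} > 0$ and $L < \infty$, this holds, and no additional work is needed. The main pitfall to guard against is the edge case $\|\mathbf{g}_k\| \approx 0$, where the step-size bound in Lemma~\ref{lem:alpha} might fail; however, such iterates already satisfy the conclusion trivially, so they do not affect the argument.
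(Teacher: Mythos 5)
Your proposal is correct and follows essentially the same route as the paper: telescoping the descent bound of Theorem~\ref{thm:decrease_appendix}, invoking Assumption~\ref{ass:lower} to obtain summability of $\|\nabla\mathcal{L}(\mathbf{x}_k)\|^2$, and concluding $\|\nabla\mathcal{L}(\mathbf{x}_k)\|\to 0$. Your explicit continuity argument for the cluster-point claim (via $L$-Lipschitzness of $\nabla\mathcal{L}$) simply spells out a step the paper leaves implicit, so no discrepancy remains.
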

 
\begin{proof}
By Theorem~\ref{thm:decrease_appendix} and Assumption~\ref{ass:lower} (lower bound on $\mathcal{L}$), we have:
\begin{equation}
\sum_{k=0}^{\infty} c\|\nabla\mathcal{L}(\mathbf{x}_k)\|^2 \leq \mathcal{L}(\mathbf{x}_0) - \mathcal{L}_{\inf} < \infty
\end{equation}
 
Since $c > 0$, we must have $\sum_{k=0}^{\infty}\|\nabla\mathcal{L}(\mathbf{x}_k)\|^2 < \infty$, which implies $\|\nabla\mathcal{L}(\mathbf{x}_k)\| \to 0$ as $k \to \infty$. This means that every cluster point of the sequence $\{\mathbf{x}_k\}$ is a stationary point of $\mathcal{L}$.
\end{proof}
 
\subsection{FCM Integration with DDIM}
\label{subsec:fcm_ddim}
 
Finally, we analyze how FCM integrates with the DDIM sampling process and prove Proposition~\ref{prop:contraction} from the main paper.
 
\begin{proposition}[Contraction Preservation]
\label{prop:contraction_appendix}
Let $\Phi_t$ be a DDIM step that is $\rho$-contractive (with $\rho < 1$) in mean-square sense:
\begin{equation}
\mathbb{E}[\|\Phi_t(\mathbf{u})-\Phi_t(\mathbf{v})\|^2] \leq \rho\|\mathbf{u}-\mathbf{v}\|^2
\end{equation}
 
Define $\Psi_t(\mathbf{u}) = T_k(\Phi_t(\mathbf{u}))$, where $T_k(\mathbf{u}) = \mathbf{u} - \alpha_k\nabla\mathcal{L}(\mathbf{u})$. Under Assumptions \ref{ass:smooth}--\ref{ass:convex} and $\alpha_k \leq 1/L$, $\Psi_t$ is also $\rho$-contractive in expectation, thus preserving the diffusion contraction property.
\end{proposition}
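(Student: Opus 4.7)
\textbf{Proof proposal for Proposition~\ref{prop:contraction_appendix}.}
The plan is to show that $T_k$ is a (pathwise) non-expansion on $\mathbb{R}^d$ and then to compose this pathwise bound with the mean-square contraction of $\Phi_t$. Concretely, because $\alpha_k\le 1/L<2/L$, the FCM gradient step $T_k(\mathbf{u})=\mathbf{u}-\alpha_k\nabla\mathcal{L}(\mathbf{u})$ falls inside the regime covered by Lemma~\ref{lem:nonexp}, so for any two deterministic points $\mathbf{a},\mathbf{b}\in\mathbb{R}^d$,
\begin{equation*}
\|T_k(\mathbf{a})-T_k(\mathbf{b})\|^2 \;\le\; \|\mathbf{a}-\mathbf{b}\|^2.
\end{equation*}
This is the key ingredient, and I would state it first as a one-line consequence of Lemma~\ref{lem:nonexp} before touching the stochastic DDIM step.

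Next I would substitute $\mathbf{a}=\Phi_t(\mathbf{u})$ and $\mathbf{b}=\Phi_t(\mathbf{v})$. Since the non-expansion holds pointwise (for every realisation of the randomness inside $\Phi_t$), we obtain
\begin{equation*}
\|\Psi_t(\mathbf{u})-\Psi_t(\mathbf{v})\|^2
\;=\; \|T_k(\Phi_t(\mathbf{u}))-T_k(\Phi_t(\mathbf{v}))\|^2
\;\le\; \|\Phi_t(\mathbf{u})-\Phi_t(\mathbf{v})\|^2
\end{equation*}
almost surely. Taking expectations (which preserves the inequality by monotonicity) and invoking the $\rho$-contractivity assumption of $\Phi_t$,
\begin{equation*}
\mathbb{E}\!\left[\|\Psi_t(\mathbf{u})-\Psi_t(\mathbf{v})\|^2\right]
\;\le\; \mathbb{E}\!\left[\|\Phi_t(\mathbf{u})-\Phi_t(\mathbf{v})\|^2\right]
\;\le\; \rho\,\|\mathbf{u}-\mathbf{v}\|^2,
\end{equation*}
which is exactly the claim.

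The delicate point I anticipate is the interpretation of $\alpha_k$ when it is computed from the current iterate via the BB-style rule, because strictly speaking the step sizes at $\Phi_t(\mathbf{u})$ and $\Phi_t(\mathbf{v})$ need not coincide. To keep the proposition clean, I would follow the statement's phrasing and treat $\alpha_k$ as a constant in $(0,1/L]$ determined before the step, so that $T_k$ is a genuine function rather than a state-dependent map; Lemma~\ref{lem:alpha} guarantees that whatever value the algorithm picks lies in $[1/(2L),1/L]$, so this simplification only costs a remark and does not weaken the statement. If one wishes to be fully faithful to the algorithm with input-dependent $\alpha_k$, the same conclusion can be recovered by observing that, for fixed $\mathbf{u},\mathbf{v}$, both step sizes lie in the admissible window and Lemma~\ref{lem:nonexp} can be applied to whichever of the two produces the larger Lipschitz factor, with a short triangle-inequality argument handling the mismatch; I would relegate this refinement to a remark after the main proof.
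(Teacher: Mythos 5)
Your proposal is correct and follows essentially the same route as the paper: invoke Lemma~\ref{lem:nonexp} to get non-expansiveness of $T_k$ under $\alpha_k \leq 1/L$, apply it pathwise to $\Phi_t(\mathbf{u})$ and $\Phi_t(\mathbf{v})$, take expectations, and use the assumed $\rho$-contractivity of $\Phi_t$. Your additional remark on treating the BB-computed $\alpha_k$ as fixed (and how to handle state-dependent step sizes) is a reasonable clarification that the paper glosses over, but it does not change the argument.
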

 
\begin{proof}
From Lemma~\ref{lem:nonexp}, we know that $T_k$ is non-expansive: $\|T_k(\mathbf{a}) - T_k(\mathbf{b})\|^2 \leq \|\mathbf{a} - \mathbf{b}\|^2$. Therefore, for any $\mathbf{u}$, $\mathbf{v}$:
\begin{align}
\mathbb{E}[\|\Psi_t(\mathbf{u})-\Psi_t(\mathbf{v})\|^2] &= \mathbb{E}[\|T_k(\Phi_t(\mathbf{u}))-T_k(\Phi_t(\mathbf{v}))\|^2]\\
&\leq \mathbb{E}[\|\Phi_t(\mathbf{u})-\Phi_t(\mathbf{v})\|^2]\\
&\leq \rho\|\mathbf{u}-\mathbf{v}\|^2
\end{align}
 
Thus, $\Psi_t$ remains a $\rho$-contraction in mean-square sense.
\end{proof}
 
 
\subsection{Robustness to Non-Convexity}
\label{subsec:robustness}
 
While Assumption~\ref{ass:convex} (local convexity) is used in our theoretical analysis, FCM shows empirical robustness even when this assumption is violated.
 
\begin{remark}[Behavior Under Non-Convexity]
\label{rem:nonconvex}
If local convexity fails, the firm non-expansiveness of $T_k$ may break. However, Theorem~\ref{thm:decrease_appendix} and Corollary~\ref{cor:grad_zero} remain valid, guaranteeing that the FCM step decreases $\mathcal{L}$ and drives $\|\nabla\mathcal{L}(\mathbf{x}_k)\| \to 0$. This makes FCM robust in practice even for non-convex $\mathcal{L}$.
\end{remark}
 
\subsection{Practical Parameter Settings}
\label{subsec:practical}
 
Over-estimating $L$ in the algorithm only tightens the cap $\alpha_k \leq 1/L$ and preserves all theoretical guarantees. Under-estimating $L$ triggers the single Armijo halving, which prevents divergence while maintaining efficiency.
 
This combination of theoretical guarantees and practical robustness makes FCM an ideal choice for likelihood updates in diffusion-based 3D reconstruction, enabling high-quality results.

\begin{figure}[t]
\centering
\includegraphics[width=0.87\linewidth]{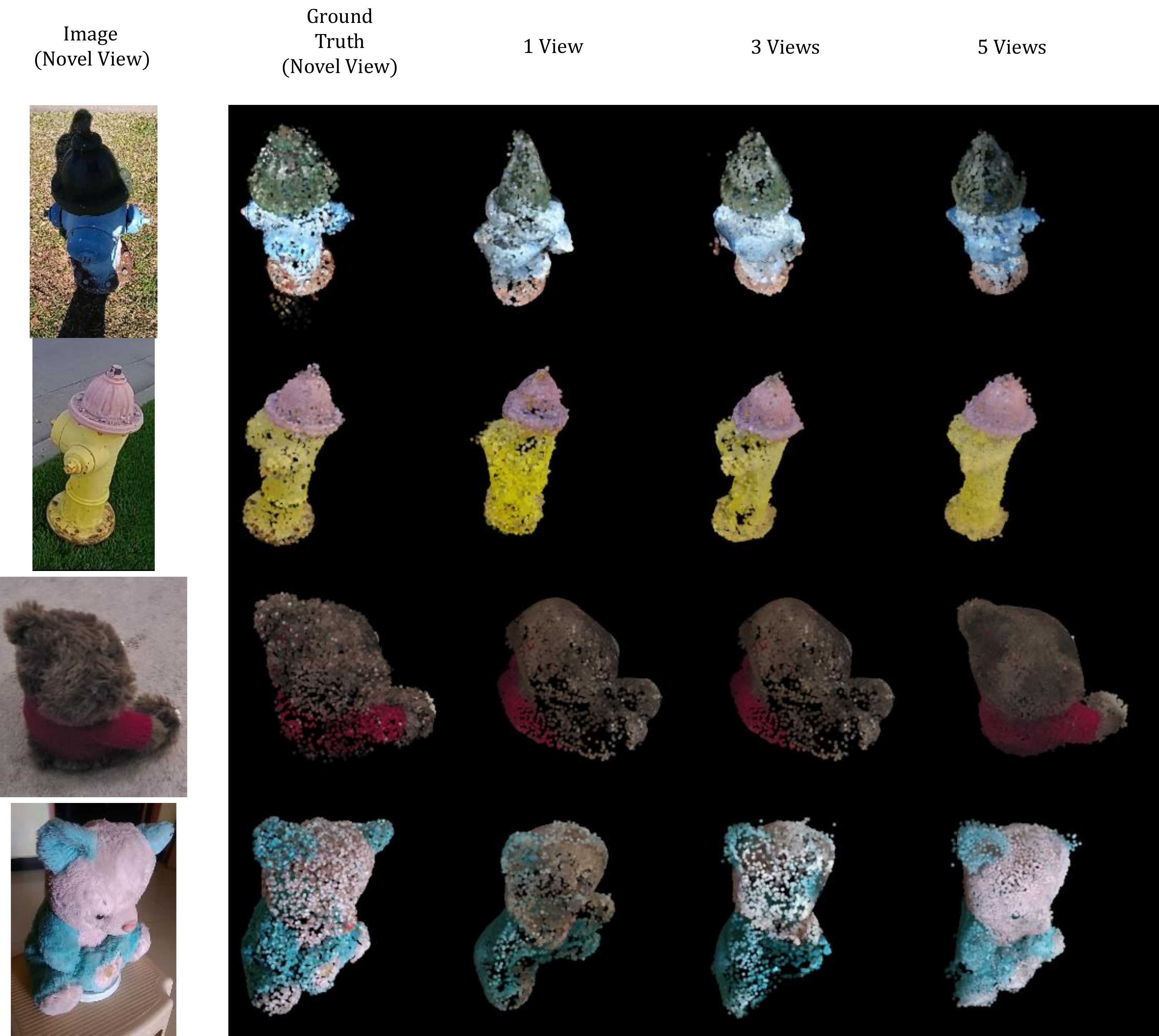}
\caption{Qualitative results of multi view reconstruction on CO3D dataset.}%
\label{fig:multi_view_co3d}
\end{figure}
\begin{table*}
\centering
\makebox[\linewidth]{
    {\scriptsize
    \begin{tabular}{c|ccc|ccc|c|ccc}
    \toprule
    \multirow{2}{*}{\textit{Category}} & \multicolumn{3}{c|}{EMD($\times10$)} & \multicolumn{3}{c|}{CD($\times10$)} & \multicolumn{4}{c}{F-score} \\
    \cline{2-11} 
& PC$^2$ \cite{MelasKyriazi2023PC2PP} & BDM \cite{Xu2024BayesianDM} & Ours & PC$^2$ \cite{MelasKyriazi2023PC2PP} & BDM \cite{Xu2024BayesianDM} & Ours& PC$^2 \dagger$ \cite{MelasKyriazi2023PC2PP} & PC$^2$ \cite{MelasKyriazi2023PC2PP} & BDM \cite{Xu2024BayesianDM} & Ours  \\ \midrule
    airplane         & \underline{0.551}  & 0.552 & \textbf{0.476} & 0.434 & \underline{0.409} & \textbf{0.378} & 0.473 & 0.457 & \underline{0.524} & \textbf{0.543} \\
    car              & \underline{0.524}  & 0.535 & \textbf{0.517} & \underline{0.487} & 0.507 & \textbf{0.460} & 0.359 & \underline{0.331} & 0.330 & \textbf{0.386} \\
    chair            & \textbf{0.651}  & \underline{0.656} & 0.662 & \textbf{0.579} & \underline{0.616} & 0.679 & 0.290 & \underline{0.280} & 0.274 & \textbf{0.281} \\
    table            & \underline{0.662}  & \textbf{0.635} & 0.691 & \underline{0.649} & \textbf{0.644}  & 0.727 & 0.270 & 0.260 & \underline{0.284} & \textbf{0.319} \\ \midrule
    \textit{Average} & 0.597  & \underline{0.594} & \textbf{0.587} &\textbf{0.542} & \underline{0.544}  & 0.561 & 0.348 & 0.332 & \underline{0.353} & \textbf{0.382} \\
    \bottomrule
    \end{tabular}
    }
    }
    \caption{Extended comparison of single-view 3D reconstruction on ShapeNet. The table includes results from the original studies with about 1000 NFEs. Scores marked with $\dagger$ are reported from the original paper. Our method, however, achieves competitive performance with fewer function evaluations.}
    \label{tab:shapenet_comparison_1000}
\end{table*}
\begin{figure}
  \begin{minipage}[h]{0.39\textwidth}
    {\tiny
    \begin{tabular}{c|ccc}
    \toprule
    \textit{Average} & EMD($\times10$) & CD($\times10$) & F-score   \\ \midrule
    PC$^2$\cite{MelasKyriazi2023PC2PP} & 2.662 & 3.893 & 0.244 \\
    Ours         & \textbf{1.206} & \textbf{1.527} & \textbf{0.281} \\
    \midrule
    Ours(3-view) & 1.001 & 1.131 & 0.388 \\
    Ours(5-view) & 0.941 & 1.020 & 0.423 \\
    \bottomrule
    \end{tabular}
    }
    \captionof{table}{Quantitative results of CO3D dataset. The results represent the average values for two categories used in the experiments, with ground truth regularized to the range [-0.5, 0.5]. The F-score threshold is set to 0.2, and the CD corresponds to the results for the L1 metric.}
    \label{tab:co3d_results}
  \end{minipage}
  \hfill
  \begin{minipage}{0.58\textwidth}
    \centering
    \makebox[\linewidth]{
    {\scriptsize
    \begin{tabular}{c|cccc}
    \toprule
    \textit{Method}                                & EMD($\times10$)   &CD($\times10$)   & F-score           & time$(s/sample)$ \\ \midrule
         PC$^2$ \cite{MelasKyriazi2023PC2PP}  & 0.597                & \textbf{0.542} & 0.332          & \underline{8.88} \\       
         BDM \cite{Xu2024BayesianDM}                       & \underline{0.594}       &  \underline{0.544}              & \underline{0.353}  & 10.56 \\ 
         Ours                                      & \textbf{0.587}    & 0.561          & \textbf{0.382} & \textbf{6.03} \\ 
    \bottomrule
    \end{tabular}
    }
    }
    \captionof{table}{Time efficiency analysis of different methods. We report average scores of ShapeNet dataset and total sampling time (seconds per sample). The highest scores are marked in \textbf{bold}, and the second-highest scores are \underline{underlined}. Our method achieves higher reconstruction accuracy with respect to F-score and EMD than prior approaches while maintaining comparable inference speed.}
    \label{tab:time_comparison}
  \end{minipage}
\end{figure}
\begin{table}
\centering
\begin{adjustbox}{width=\textwidth}
\begin{tabular}{c|ccc|c|ccc|c|ccc}
\toprule
\textbf{$L$} & EMD($\times10$) &CD($\times10$) & F-score & \textbf{$\eta_{\text{FCM}}$} & EMD($\times10$) &CD($\times10$) & F-score & \textbf{$\delta_0$} & EMD($\times10$) &CD($\times10$) & F-score \\ \midrule
$100$ & 0.731 & 0.770 & 0.307 & $10^{-6}$ & 0.585 & 0.563 & 0.373  & $5\times10^{-2}$ & 0.644 & 0.615 & 0.343 \\                                                                              
$10$ & 0.585 & 0.563  & 0.373 & $10^{-5}$& 0.579 & 0.566 & 0.377   & $2\times10^{-2}$ & 0.587 & 0.561 & 0.382 \\                                                                              
$1$ & 0.588 & 0.564   & 0.376 & $10^{-4}$ & 0.587 & 0.561 & 0.382  & $10^{-2}$        & 0.594 & 0.574 & 0.369 \\                                                                              
$2/3$ & 0.587 & 0.561 & 0.382 & $10^{-3}$ & 0.586 & 0.565 & 0.370  & $10^{-3}$        & 0.665 & 0.660 & 0.330 \\                                                                              
\bottomrule
\end{tabular}
\end{adjustbox}
\caption{Hyperparameter study for FCM-guided sampling. Varying Lipschitz constant $L$, Armijo factor $\eta_{\text{FCM}}$ and the initial discrepancy radius $\delta_0$ for the scaled curvature probe.}
\label{tab:params_exp}
\end{table}
\begin{table}
\centering
\begin{adjustbox}{width=1.0\textwidth}
\begin{tabular}{l|ccc}
\toprule
\textbf{Component} & \textbf{Ours} & \textbf{PC$^2$} & \textbf{BDM} \\ \midrule
FCM update(1 iteration)& 5.339 ms & --       & --  \\
Local Conditioning     & --       & 3.486 ms & (same as PC$^2$)  \\
NFEs                   & 256      & 1000    & 1080 \\
\bottomrule
\end{tabular}
\begin{tabular}{cc}
\toprule
\textbf{single Armijo check} & \textbf{strong Wolfe line search} \\ \midrule
 0.889 ms &  6.589 ms \\ 
\bottomrule
\end{tabular}
\end{adjustbox}
\caption{Runtime breakdown. Left: component costs and NFE counts for each method (ours: 256 NFEs; PC$^2$: 1000; BDM: 1080$=$1000$+$80). Right: cost of a single Armijo check (ours) versus a strong Wolfe line search. Our once-only Armijo strategy is substantially cheaper while preserving reliable descent, contributing to the overall speedup.}
\label{sup:runtime_exp}
\end{table}

\section{Additional Experiments}
\paragraph{Comparison with other methods proposed in their original papers.} Tab.~\ref{tab:shapenet_comparison_1000} shows a comparison of different models evaluated on the ShapeNet dataset, where each model is applied according to the method originally proposed in its respective study. PC$^2$ uses an NFEs of 1000, while BDM uses a total of 1080 NFEs—1000 for its reconstruction model and an additional 80 for its prior model. Despite using fewer NFEs, our method still achieves the highest scores in terms of both EMD and F-score. 

\paragraph{More experiments on CO3D dataset.} Tab.~\ref{tab:co3d_results} presents a comparison with PC$^2$ on the CO3D dataset and additionally provides scores for the multi-view setting. These results demonstrate the improved performance of our method over existing approaches. Furthermore, Fig.~\ref{fig:multi_view_co3d} illustrates the qualitative results of multi-view reconstruction on CO3D. To demonstrate broader applicability, Fig.~\ref{fig:add_co3d_classes} presents qualitative comparisons on two additional CO3D categories—remote and vase—against PC$^2$.

\paragraph{Hyperparameter Experiments.}
As shown in Tab.~\ref{tab:params_exp}, the sampler behaves robustly once each knob is kept within a reasonable range. In particular, as discussed in \ref{subsec:practical}, an overly conservative choice of \(1/L\) (i.e., taking \(L\) too large) activates the clamping in Eq.~\ref{eq:alpha}, so the effective update becomes overly damped and Armijo progress can stall, leading to slow or failed convergence. Nevertheless, the table shows substantial tolerance: the sampler remains reliable for \(L \in [2/3,\,10]\).

\paragraph{Time analysis.} 
As shown in Tab.~\ref{tab:time_comparison}, our method achieves the best performance in both F-score and EMD score, highlighting its effectiveness even with a 32.1\% reduction in runtime. As shown in Tab.~\ref{sup:runtime_exp}, our once-only Armijo check is dramatically cheaper than a conventional strong Wolfe line search. While a single step in PC$^2$ or BDM can be faster than our FCM step, our sampler deliberately opts for far fewer steps: $256$ NFEs versus $1000$ (PC$^2$) and $1080$ (BDM). This NFE gap dominates the end-to-end runtime—reducing the number of denoiser/forward evaluations—and leads to overall faster and more efficient reconstructions, without sacrificing accuracy.
\paragraph{Qualitative comparison with DPS-based methods.} Fig.~\ref{fig:sup_method_comparison} shows a comparison between our method and different methods. Here, the step size for the DPS-based method is set to the optimal value of $\gamma=0.05$ as identified in Fig.~\ref{fig:method_comparison}. While the DPS-based method captures the overall shape reasonably well, it fails to recover accurate color information. This limitation is attributed to the use of a fixed step size, which leads to suboptimal updates. In contrast, our method produces results that are more optimal with respect to the given measurements.

\paragraph{More examples and failure cases.} Additional qualitative results are shown in Figs.~\ref{fig:sup_traj}-~\ref{fig:sup_co3d2}, and failure cases of our method are presented in Figs.~\ref{fig:sup_shapenet_fail1}-~\ref{fig:sup_co3d_fail}. Most failure cases occur when the object has a complex structure, which can be attributed to the diffusion prior being misled by unfamiliar or uncommon data distributions.

\section{Limitations}
\label{sec:limit}
\begin{itemize}
    \item While the rendered image may resemble the reference image, the structure of the point cloud appears slightly thinner than the ground truth point cloud due to the radius of the rasterizer. This effect is particularly noticeable in thin structures, such as the legs of a chair. Additionally, due to the unavailability of a colored point cloud dataset for ShapeNet, we used a dataset generated by KeypointNet. However, since KeypointNet does not assign the actual mesh colors, this may lead to a degradation in the quality of the reconstruction.
    \item Direct control of point cloud positions for likelihood updates might limit CD metric performance compared to reconstructions using only the learned prior(or posterior). However, considering the objective of our task ``image(s) to 3D reconstruction", metrics such as F-score and EMD, which measure the shape similarity, are more aligned with the task's purpose than measuring the distance between individual points.
\end{itemize}
\begin{figure}
\centering
\includegraphics[width=\linewidth]{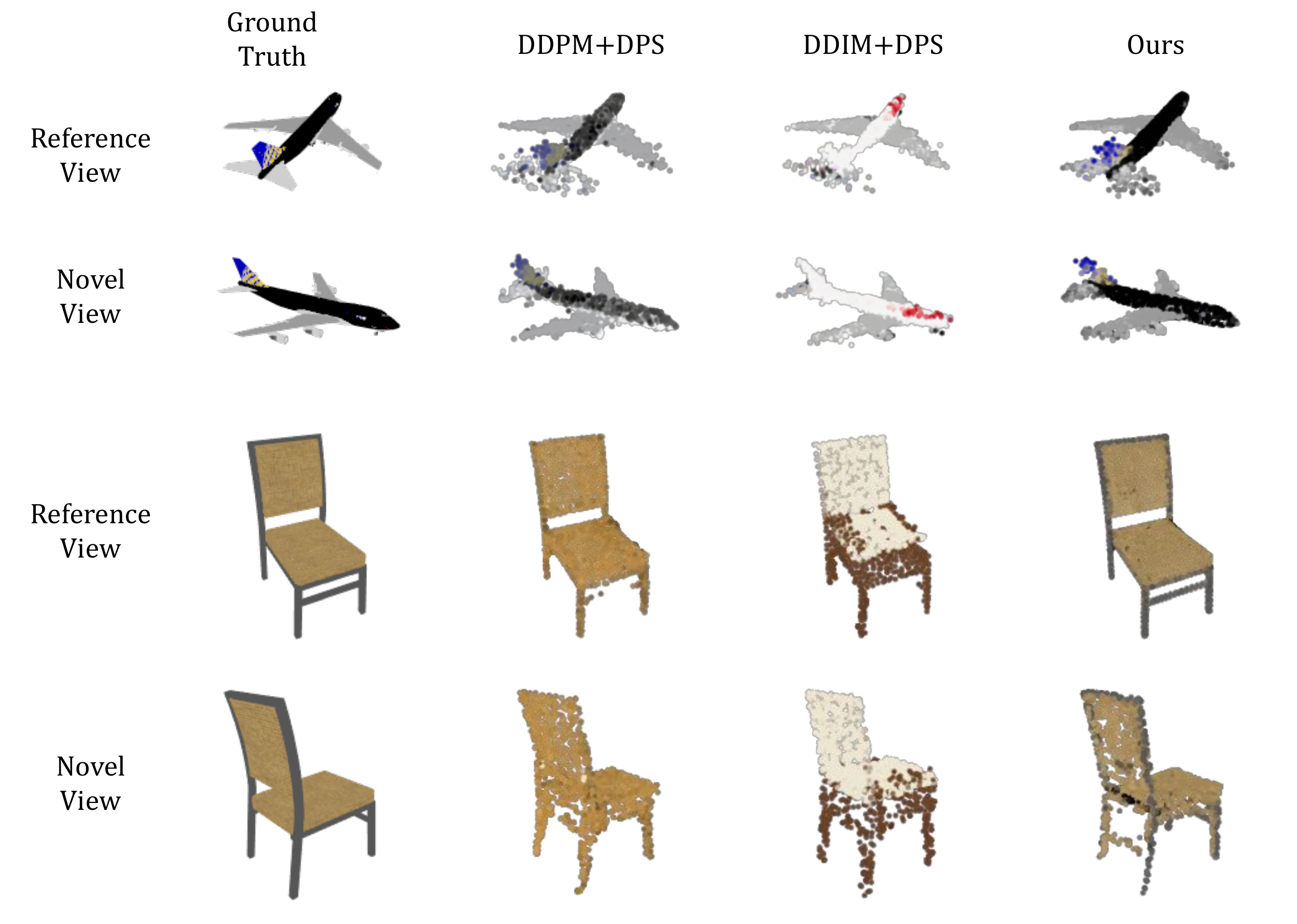}
\caption{Qualitative comparison of reconstructions for different methods of sampling. Both of DPS based methods capture the overall shape but fail to preserve the correct colors due to the suboptimality.}%
\label{fig:sup_method_comparison}
\end{figure}
\clearpage
\begin{figure}
\centering
\makebox[\textwidth][c]{
\includegraphics[width=\linewidth]
{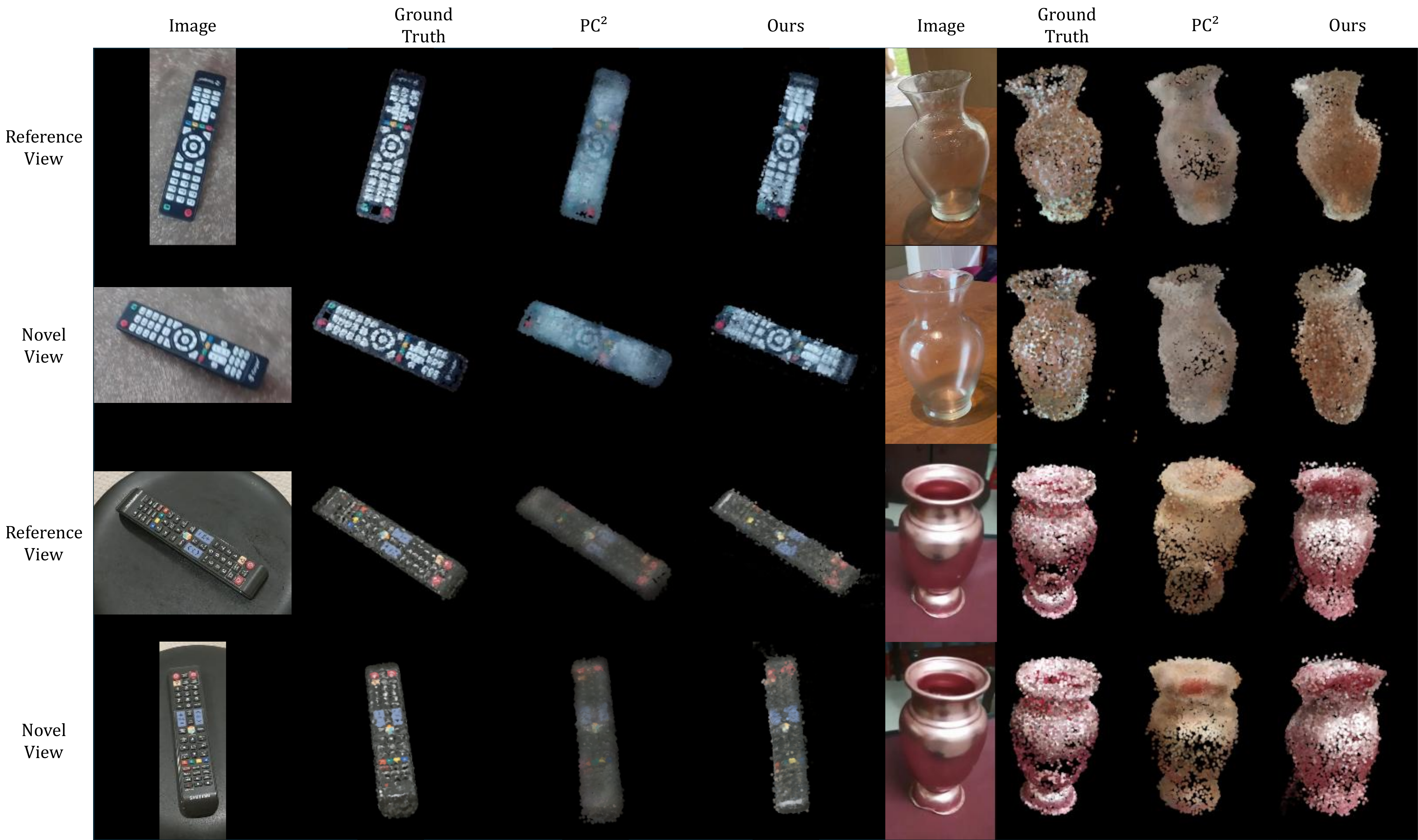}}
\caption{Qualitative comparison for single-view reconstruction on additional CO3D categories (remote, vase), against PC$^2$.}
\label{fig:add_co3d_classes}
\end{figure}
\clearpage

\begin{figure*}
\centering
  \makebox[\textwidth][c]{
    \includegraphics[width=1\linewidth]{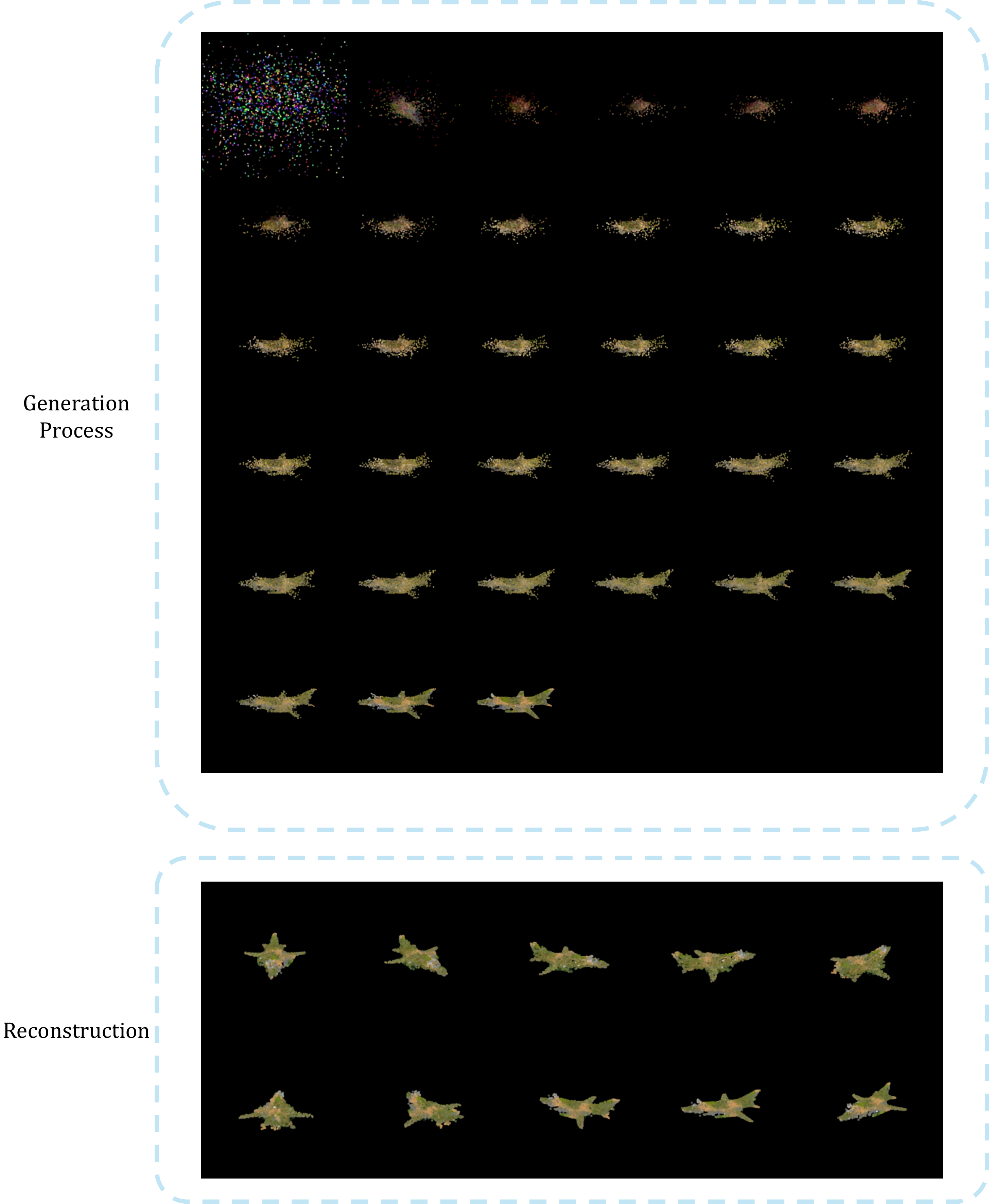}
  }
\caption{Generation trajectory and final reconstruction. Top: starting from pure noise at $T=256$, the sampler progressively denoises toward a coherent airplane; we display every 8th diffusion level ($\Delta T=8$) down to $T=0$. Bottom: the resulting $T=0$ sample rendered from multiple viewpoints.}%
\label{fig:sup_traj}
\end{figure*}

\begin{figure*}
\centering
  \makebox[\textwidth][c]{
    \includegraphics[width=1\linewidth]{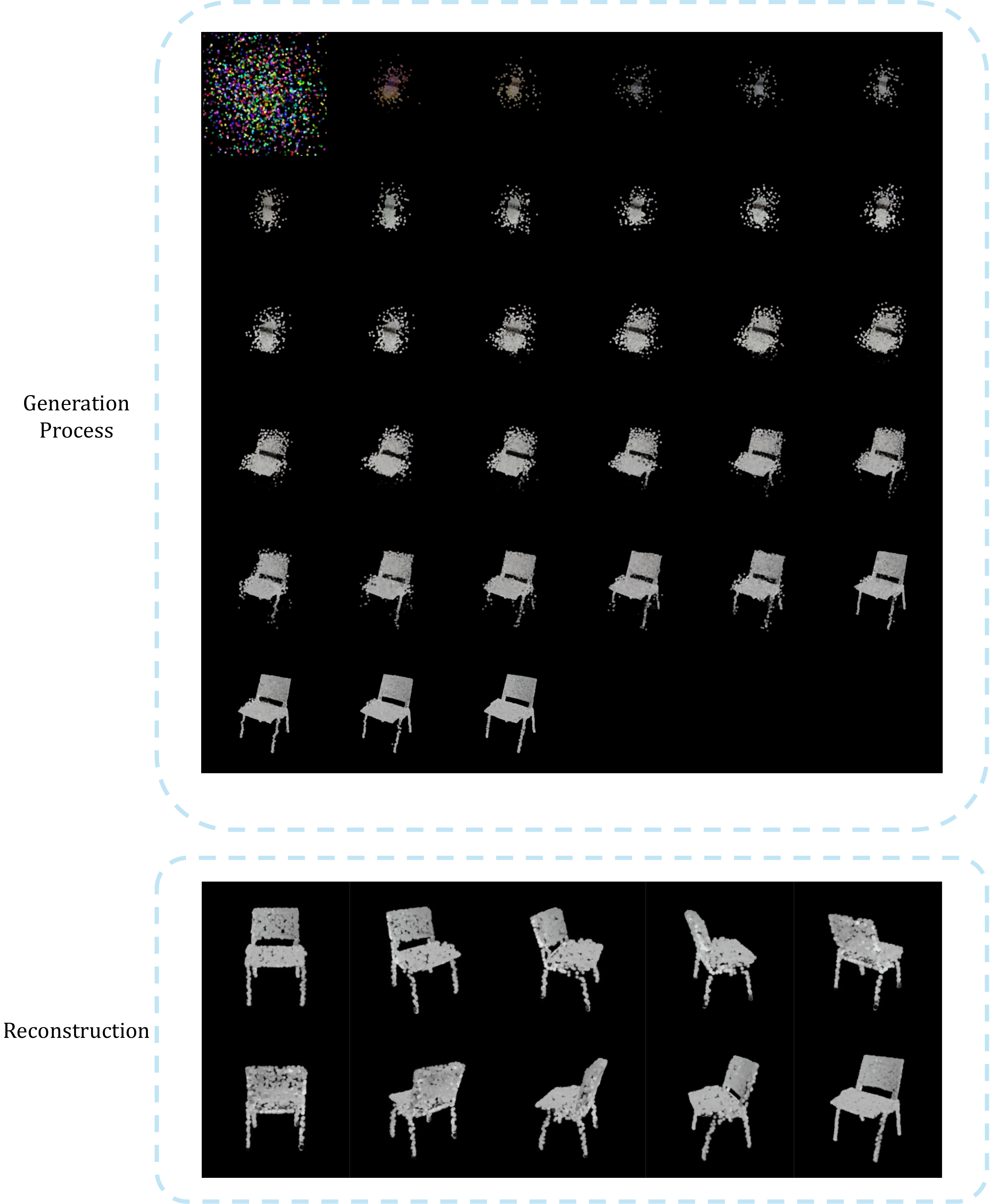}
  }
\caption{Generation trajectory and final reconstruction. Top: starting from pure noise at $T=256$, the sampler progressively denoises toward a coherent chair; we display every 8th diffusion level ($\Delta T=8$) down to $T=0$. Bottom: the resulting $T=0$ sample rendered from multiple viewpoints.}%
\label{fig:sup_traj2}
\end{figure*}

\begin{figure*}
\centering
  \makebox[\textwidth][c]{
    \includegraphics[width=1\linewidth]{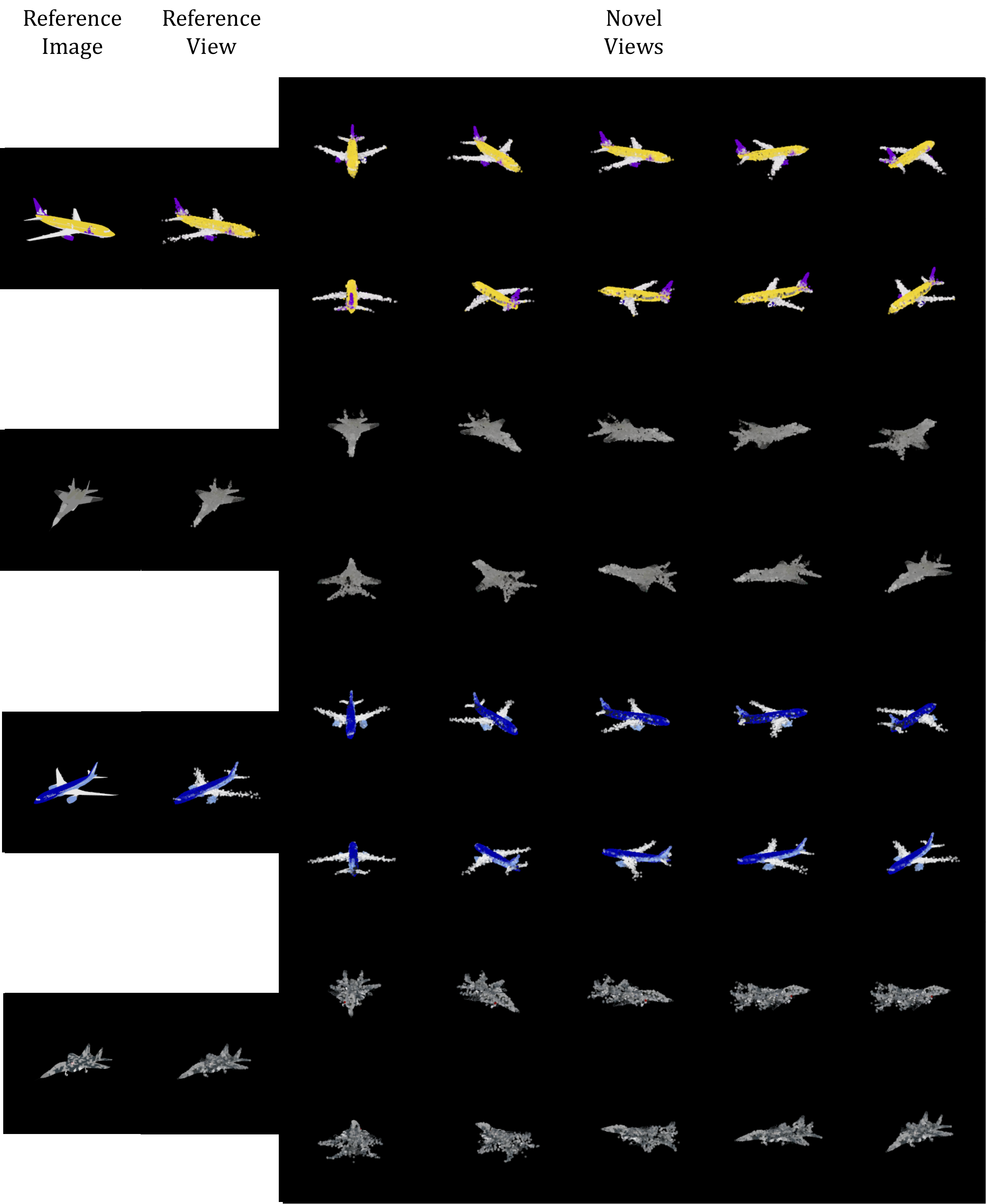}
  }

\caption{Additional qualitative results for single-view reconstruction on ShapeNet: Airplane}%
\label{fig:sup_shapenet1}
\end{figure*}
\begin{figure*}
\centering
  \makebox[\textwidth][c]{
    \includegraphics[width=1\linewidth]{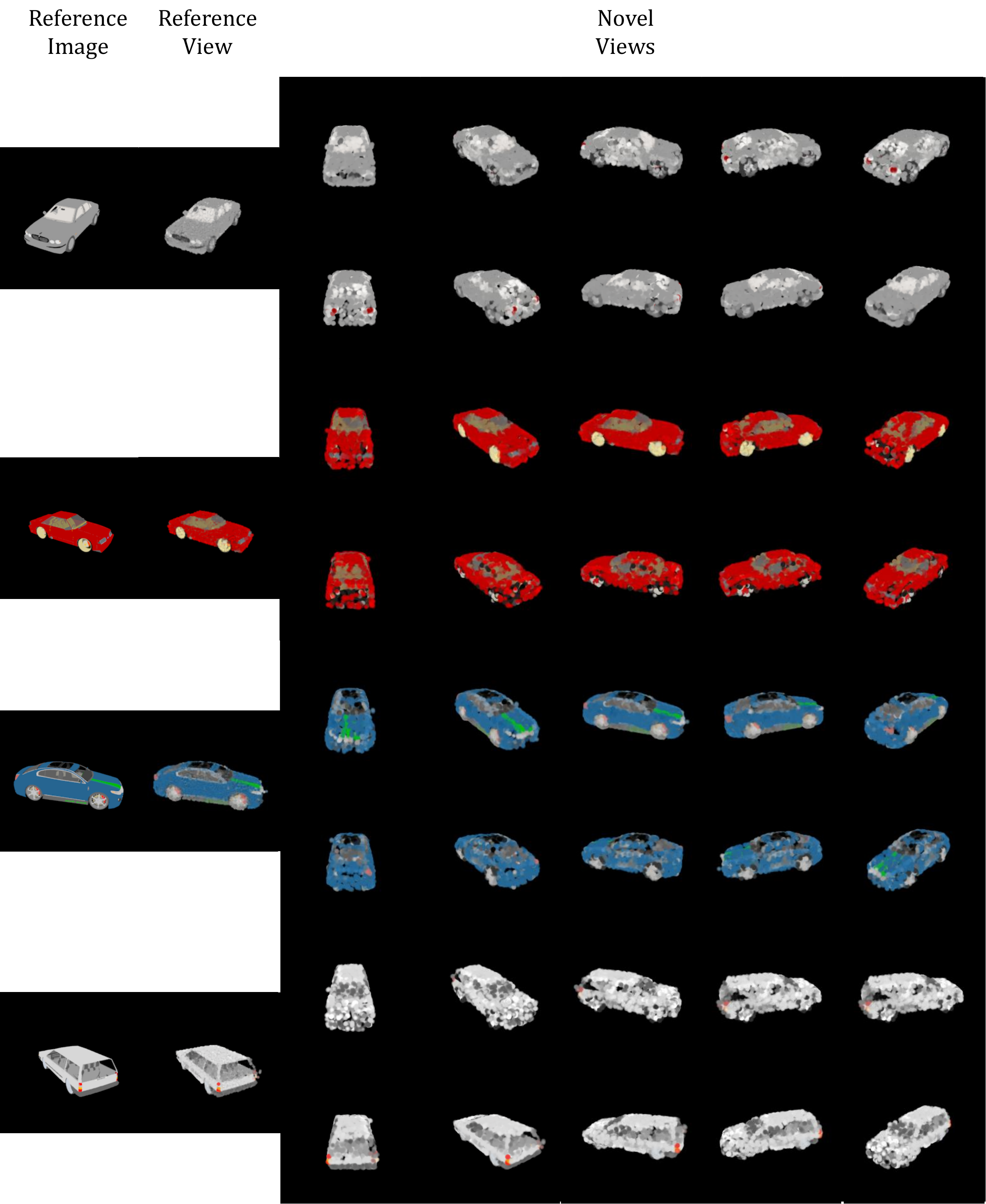}
  }

\caption{Additional qualitative results for single-view reconstruction on ShapeNet.: Car}%
\label{fig:sup_shapenet2}
\end{figure*}
\begin{figure*}
\centering
  \makebox[\textwidth][c]{
    \includegraphics[width=1\linewidth]{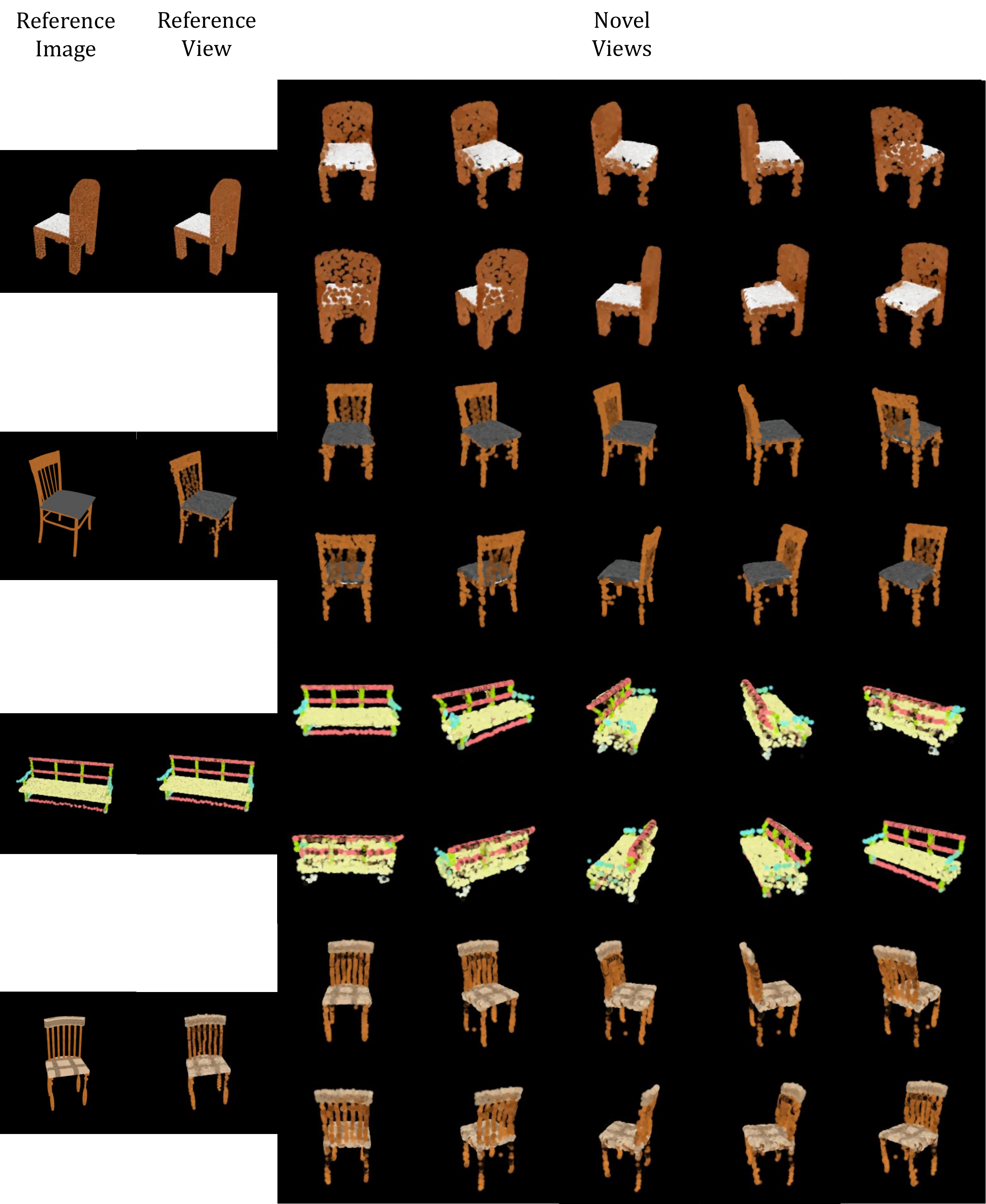}
  }

\caption{Additional qualitative results for single-view reconstruction on ShapeNet.: Chair}%
\label{fig:sup_shapenet3}
\end{figure*}
\begin{figure*}
\centering
  \makebox[\textwidth][c]{
    \includegraphics[width=1\linewidth]{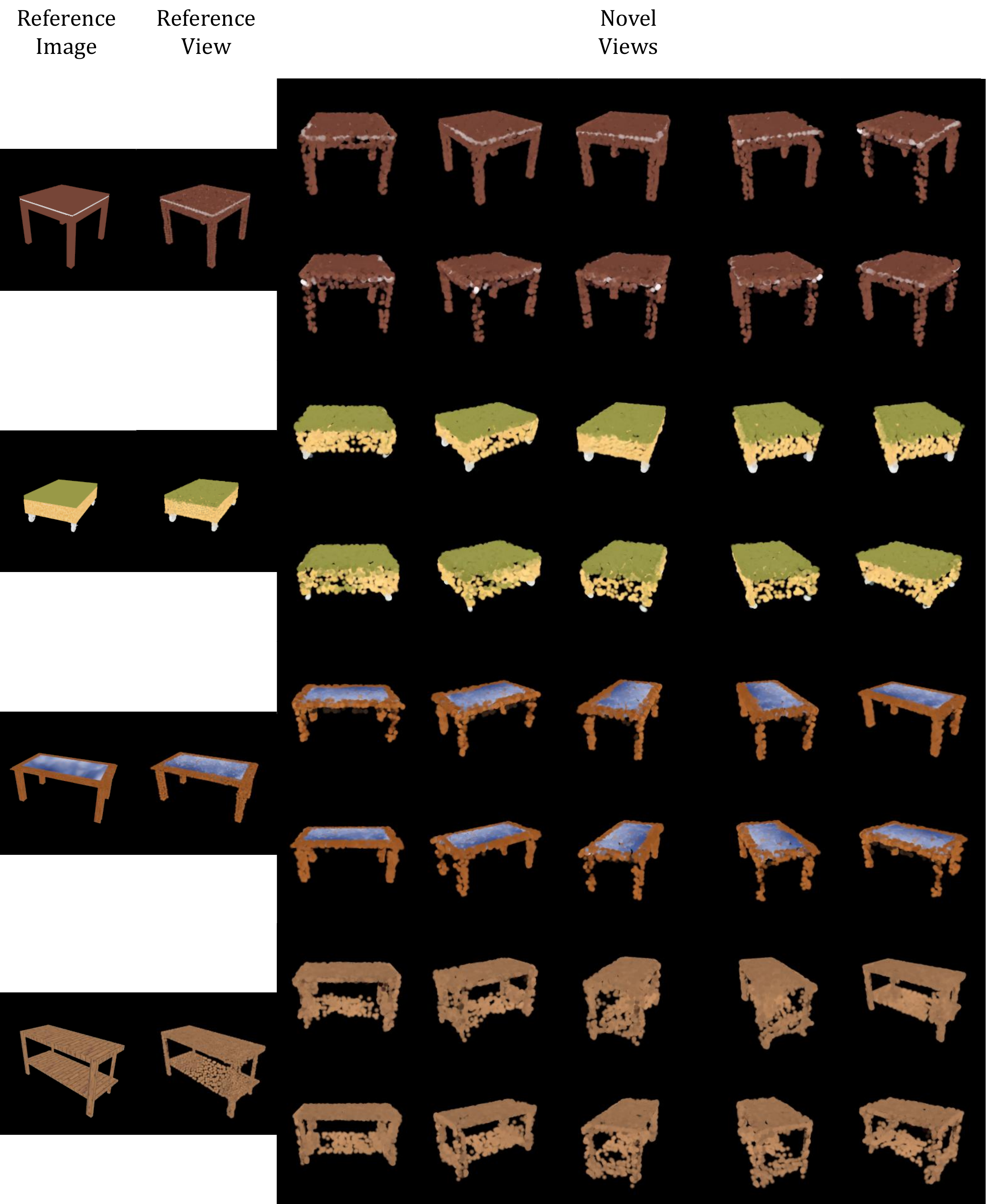}
  }

\caption{Additional qualitative results for single-view reconstruction on ShapeNet.: Table}%
\label{fig:sup_shapenet4}
\end{figure*}
\clearpage
\begin{figure*}
\centering
  \makebox[\textwidth][c]{%
    \includegraphics[width=\linewidth]{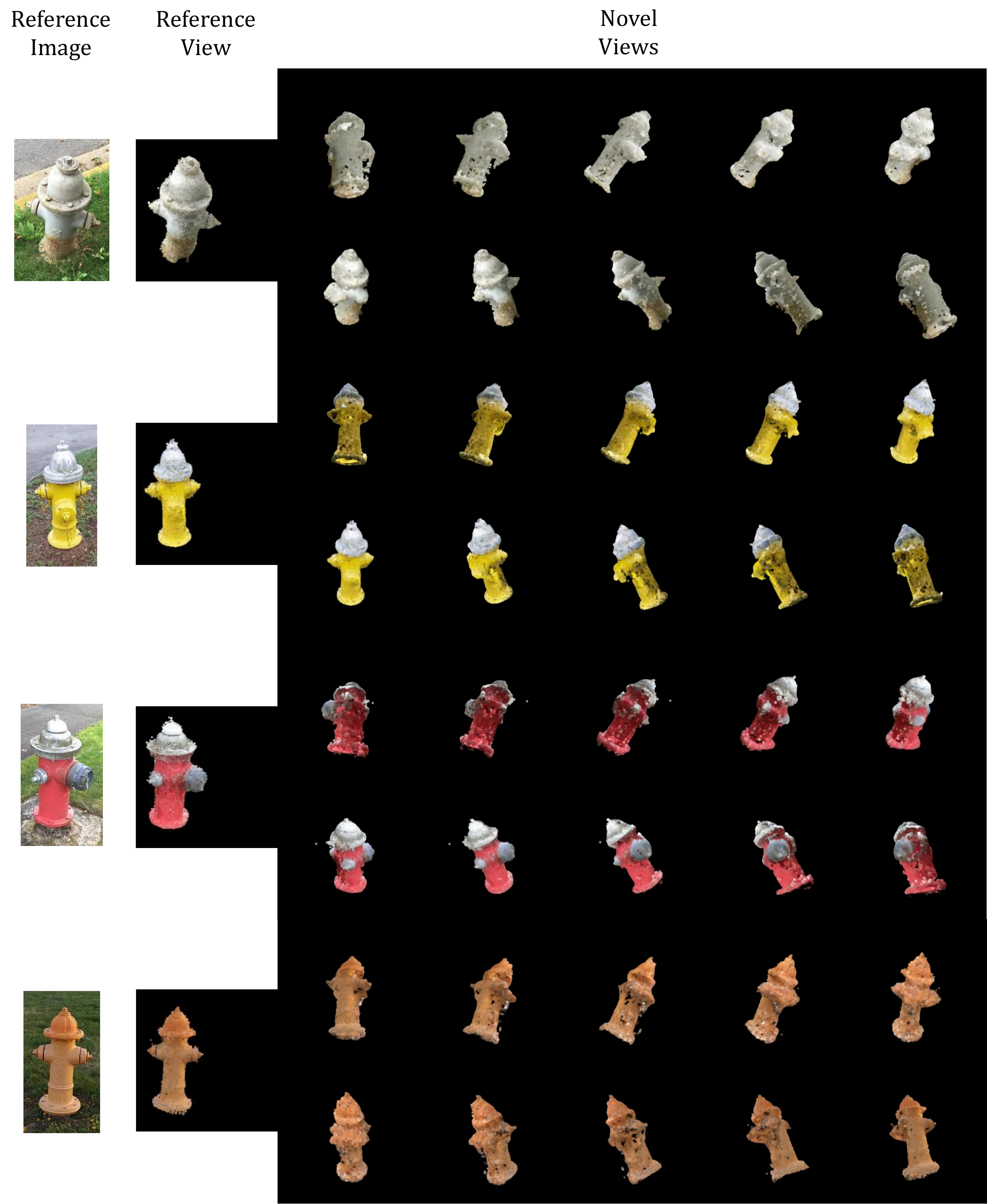}
  }

\caption{Additional qualitative results for single-view reconstruction on CO3D.: Hydrant}%
\label{fig:sup_co3d1}
\end{figure*}
\clearpage
\begin{figure*}
\centering
  \makebox[\textwidth][c]{%
    \includegraphics[width=\linewidth]{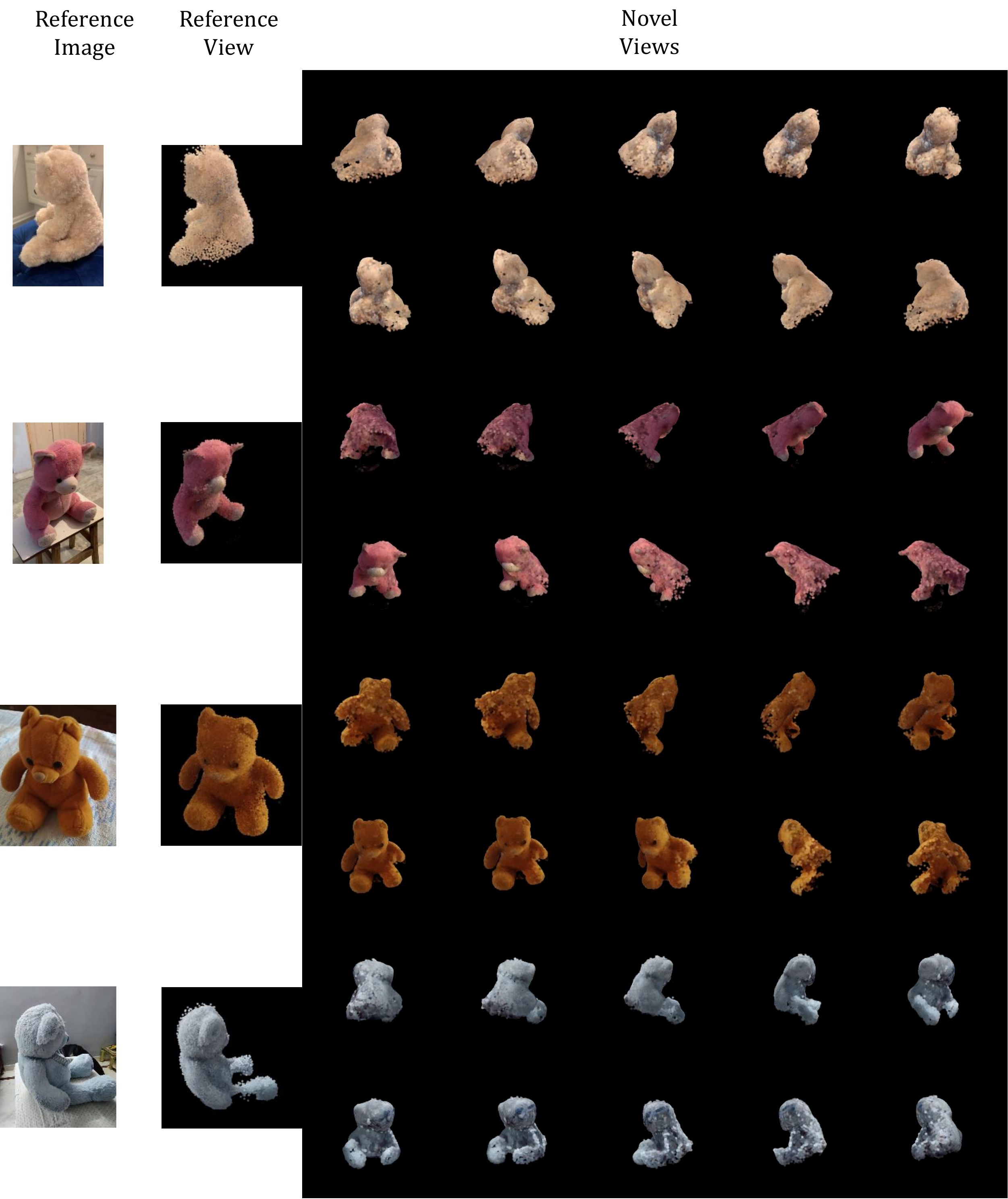}
  }

\caption{Additional qualitative results for single-view reconstruction on CO3D.: Teddybear}%
\label{fig:sup_co3d2}
\end{figure*}
\clearpage
\begin{figure*}
\centering
  \makebox[\textwidth][c]{%
    \includegraphics[width=1.2\linewidth]{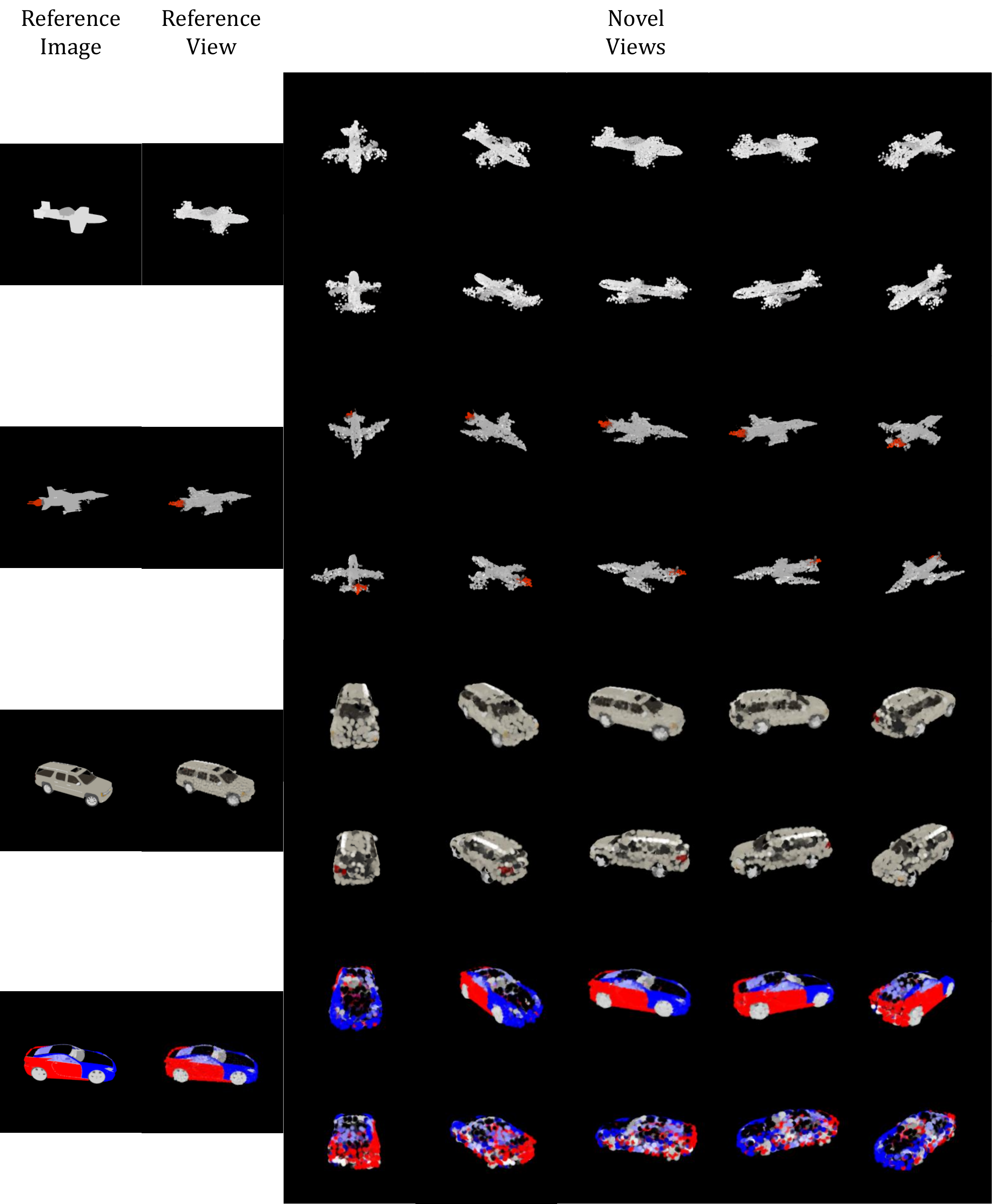}
  }
\caption{Analysis of failure cases in single-view reconstruction on ShapeNet.: Airplane\&Car}%
\label{fig:sup_shapenet_fail1}
\end{figure*}
\clearpage
\begin{figure*}
\centering
  \makebox[\textwidth][c]{%
    \includegraphics[width=1.2\linewidth]{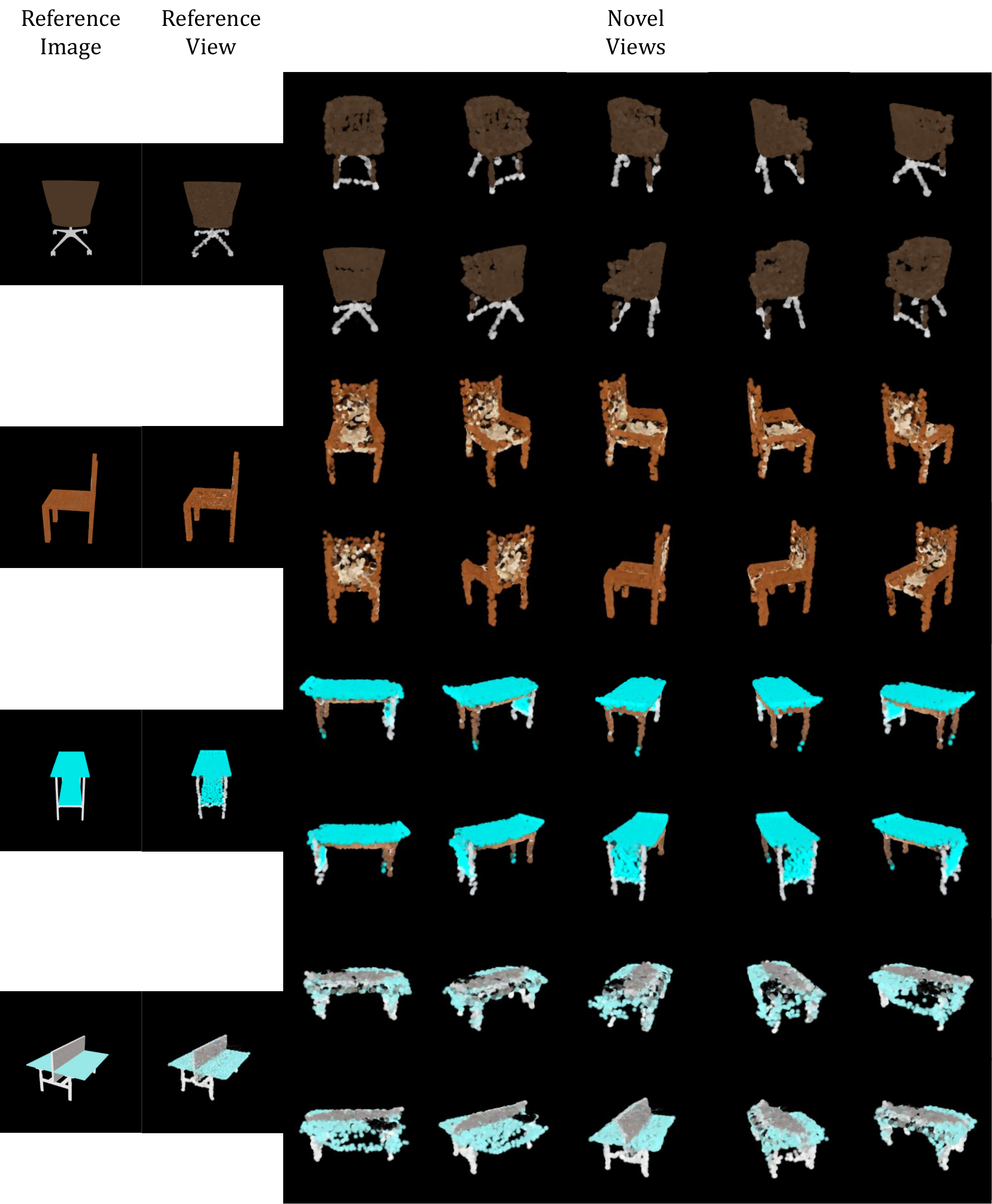}
  }
\caption{Analysis of failure cases in single-view reconstruction on ShapeNet.: Chair\&Table}%
\label{fig:sup_shapenet_fail2}
\end{figure*}
\begin{figure*}
\centering
  \makebox[\textwidth][c]{%
    \includegraphics[width=1.2\linewidth]{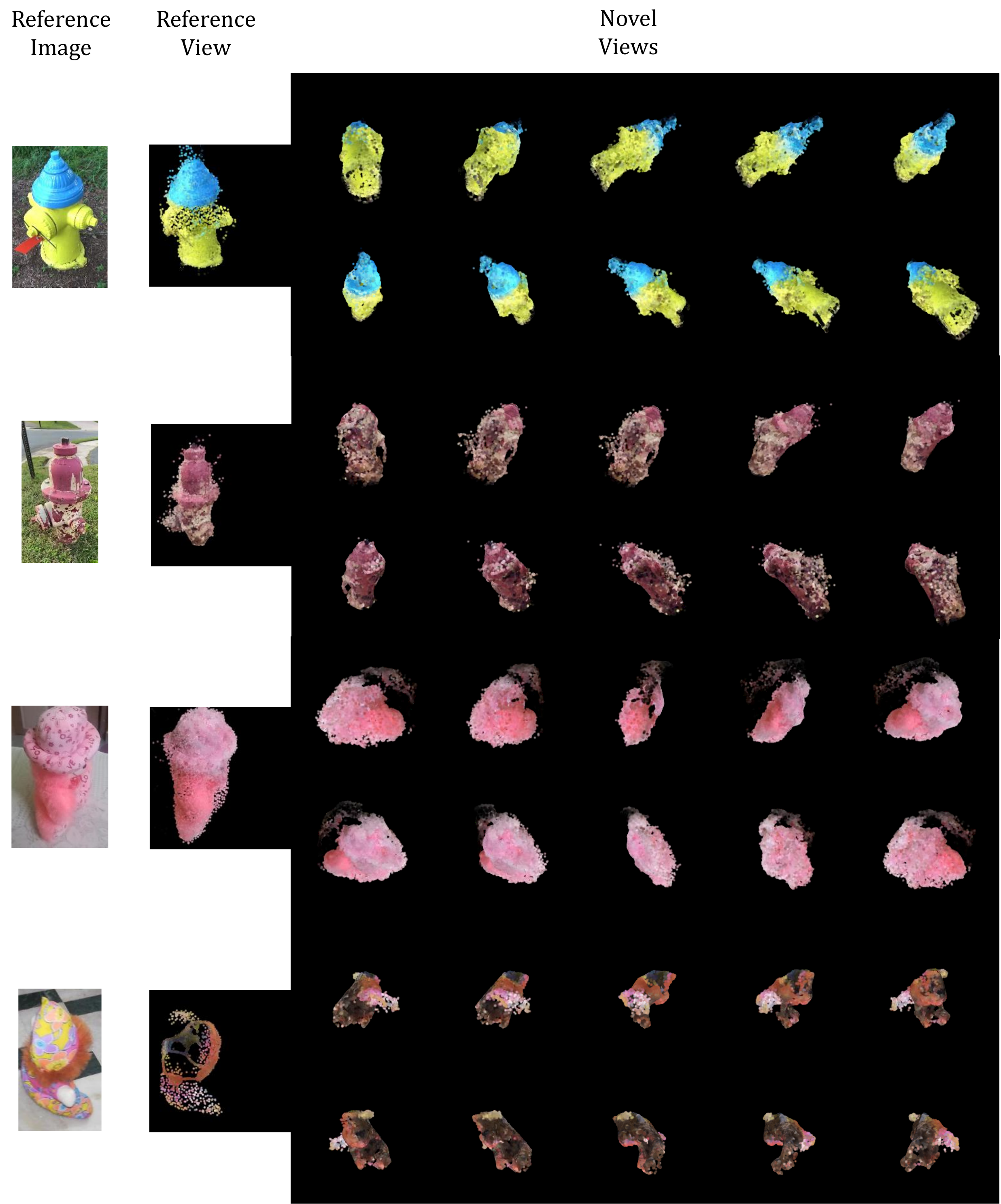}
  }
\caption{Analysis of failure cases in single-view reconstruction on CO3D.}%
\label{fig:sup_co3d_fail}
\end{figure*}

\end{document}